\newtheorem{definition}{Definition}
\newtheorem{lemma}{Lemma}
\newtheorem{corollary}{Corollary}
\DeclareMathOperator*{\essinf}    {ess\,inf}
\DeclareMathOperator*{\Vol}       {Vol}
\DeclareMathOperator*{\SphereCap} {Cap}
\DeclareMathOperator*{\Ebb}       {\mathbb{E}}
\DeclareMathOperator*{\Pbb}       {\mathbb{P}}
\DeclareMathOperator*{\NC}{NC}
\DeclareMathOperator*{\NCP}{NCP}
\DeclareMathOperator*{\PC}{PC}
\DeclareMathOperator*{\Span}{Span}
\newcommand*\diff{\,\mathop{}\!\mathrm{d}}
\definecolor{myblue}{HTML}{457b9d}
\definecolor{myred}{HTML}{E63946}
\definecolor{mygreen}{HTML}{40916c}
\definecolor{myyellow}{HTML}{ffbe0b}
\definecolor{myorange}{RGB}{247,214,157}
\definecolor{mygray}{RGB}{211,211,211}
\definecolor{mypurple}{RGB}{212,164,217}
\definecolor{gray3}{gray}{0.5}
\definecolor{gray2}{gray}{0.7}
\definecolor{gray1}{gray}{0.85}
\DeclareRobustCommand{\tikzcircle}[1]{%
  \mathrel{%
    \tikz[baseline=-0.5ex]{%
      \draw[#1,fill=#1,radius=0.1] (0,0) circle;%
    }%
  }%
}
\newcommand{\ie}{\emph{i.e.}\xspace}
\newcommand{\leftmatrix}       {\begin{pmatrix}}
\newcommand{\rightmatrix}      {\end{pmatrix}}
\newcommand{\leftmatrixsmall}  {\begin{psmallmatrix}}
\newcommand{\rightmatrixsmall} {\end{psmallmatrix}}
\def\ddefloop#1{\ifx\ddefloop#1\else\ddef{#1}\expandafter\ddefloop\fi}
\def\ddef#1{\expandafter\def\csname #1bb\endcsname{\ensuremath{\mathbb{#1}}}}
\def\ddef#1{\expandafter\def\csname #1cal\endcsname{\ensuremath{\mathcal{#1}}}}
\def\ddef#1{\expandafter\def\csname #1mat\endcsname{\ensuremath{\mathbf{#1}}}}
\def\ddef#1{\expandafter\def\csname #1matsf\endcsname{\ensuremath{\mathsf{#1}}}}
\def\ddef#1{\expandafter\def\csname #1vec\endcsname{\ensuremath{\mathbf{#1}}}}
\title{Towards Evading the Limits of Randomized Smoothing: \\ A Theoretical Analysis}
\author{
    Raphael Ettedgui$^{*1}$, Alexandre Araujo$^{*2}$, Rafael Pinot$^{3}$, Yann Chevaleyre$^{1}$, Jamal Atif$^{1}$ \\[0.3cm]
    $^{1}$ LAMSADE, Université Paris-Dauphine, CNRS, PSL University \\
    $^{2}$ INRIA, Ecole Normale Supérieure, CNRS, PSL University \\
    $^{3}$ DCL, School of Computer and Communication Sciences, EPFL, Switzerland
}
\date{}
\def\blfootnote{\gdef\@thefnmark{}\@footnotetext}
\begin{document}
\maketitle

\blfootnote{$^*$ Equal contributions}

\begin{abstract}
Randomized smoothing is the dominant standard for provable defenses against adversarial examples. Nevertheless, this method has recently been proven to suffer from important information theoretic limitations. In this paper, we argue that these limitations are not intrinsic, but merely a byproduct of current certification methods. We first show that these certificates use too little information about the classifier, and are in particular blind to the local curvature of the decision boundary. This leads to severely sub-optimal robustness guarantees as the dimension of the problem increases. We then show that it is theoretically possible to bypass this issue by collecting more information about the classifier. More precisely, we show that it is possible to approximate the optimal certificate with arbitrary precision, by probing the decision boundary with several noise distributions. Since this process is executed at certification time rather than at test time, it entails no loss in natural accuracy while enhancing the quality of the certificates. This result fosters further research on classifier-specific certification and demonstrates that randomized smoothing is still worth investigating. Although classifier-specific certification may induce more computational cost, we also provide some theoretical insight on how to mitigate it.
\end{abstract}

\section{Introduction}

Modern day machine learning models are vulnerable to adversarial examples, \ie, small perturbations to their inputs that force misclassification~\cite{globerson2006nightmare,szegedy2013intriguing,goodfellow2014explaining}.
Although a considerable amount of work has been devoted to mitigating their impact~\cite{goodfellow2014explaining,kurakin2016adversarial,madry2018towards,carlini2017towards,wong2018provable,weng2018towards,athalye2018obfuscated,raghunathan2018certified,raghunathan2018semidefinite,pinot2019theoretical,pinot2020randomization,araujo2020advocating,araujo2021lipschitz,meunier2022dynamical,li2019preventing,trockman2021orthogonalizing,singla2021skew,singla2021householder,gehr2018ai2,xu2020automatic,tjeng2018evaluating}, defending a model against these malicious inputs remains very challenging.
This difficulty to successfully resist adversarial attacks essentially comes from the difficulty to evaluate the defense mechanisms. In fact, several works have shown that it is of paramount importance to evaluate the reliability of defense mechanisms even in the worst-case scenario, \ie, against a strong and adaptive attack~\cite{tramer2020adaptive,carlini2019evaluating}.
Given that it is in general intractable to build an optimal attack for a given classifier, the community now favors \emph{certified defenses} that provide provable robustness guarantees.
Among existing certification techniques, \emph{randomized smoothing}, first introduced in~\cite{lecuyer2018certified} and refined in~\cite{li2019certified, cohen2019certified,salman2019provably}, has emerged as a dominant standard. In short, randomized smoothing applies a convolution between a base classifier and a Gaussian distribution to enhance the robustness of the classifier. 
This technique has the advantage of offering provable robustness guarantees. Formally, for any given input, we can guarantee that within a radius of this input, the classifier cannot change prediction. The maximal size of this radius is called the \emph{certification radius} of the point. Beyond Gaussian, this technique has been generalized and can now be studied under a large class of smoothing distributions~\cite{yang2020randomized}.
In short, randomized smoothing is considered the state-of-the-art for provable robustness for multiple classification tasks, among which image classification and segmentation~\cite{fischer2021scalable}.

\textbf{Limitations of Randomized Smoothing.}
However, this scheme does not provide certification for free.
In fact, it presents a trade-off in that using a distribution with a higher variance leads to more robust models, but at the cost of a reduction in standard accuracy. Evaluating the maximum accuracy that a certified method can achieve for a target certified radius is therefore crucial to assess the viability of this method. In this respect, several recent works support the argument that for a given probability distribution, the maximum radius that can be certified vanishes as the dimension of the learning problem increases~\cite{blum2020random,hayes2020extensions,kumar2020curse,yang2020randomized,mohapatra2021hidden,wu2021completing}. Specifically, 
\citet{blum2020random} have shown that to successfully certify high-dimensional images, a \emph{$\ell_p$-smoothing distributions} must have such a large variance that it leads to a trivial classifier. This seems to make the problem of simultaneously achieving robustness and accuracy unfeasible. Building upon this first observation
\citet{hayes2020extensions} demonstrated a similar result for \emph{location-scale distribution} (which include the generalized Gaussian distributions) but their use of the KL divergence makes the bound loose. A tighter bound has recently been devised by
\citet{kumar2020curse} showed that the largest $\ell_p$-certification radius for some specific class of smoothing distributions decreases with rate $\mathcal{O}(1/d^{\frac{1}{2} - \frac{1}{p}})$ where $d$ is the dimension of the images. This result was further generalized by
\citet{yang2020randomized}. Along the same line, \citet{wu2021completing} demonstrated an equivalent result for $p > 2$ for \emph{spherical symmetric distributions} and a tight bound for $\ell_2$-certification radius.
From an information-theoretic perspective, these results suggest that random smoothing is a doomed method for high-dimensional learning problems. This may lead some in the community to consider this method irrelevant in many modern machine learning tasks. 

\textbf{Closely Related Work.}
Two recent works \cite{dvijotham2020framework,mohapatra2020higherorder} have started exploring the lead of new certification methods, by using additional information on the classifier. For instance,
\citet{dvijotham2020framework} use the full probability distribution at each point instead of just the probability of the dominant class. They combine this approach with an alternative to the Neyman-Pearson certification, namely relaxing the attack constraint using $f$-divergences, and manage to obtain slightly better certificates than existing certification methods. Similarly, \citet{mohapatra2020higherorder} and \citet{levine2021tight} showed that using first-order or second-order information leads to better certified radius.
\citeauthor{mohapatra2020higherorder} also shows that it is theoretically possible to reconstruct a Gaussian smoothed classifier using only information about its successive derivatives at the point of interest (even the first derivatives are, however, extremely expensive to compute).

\textbf{Our contributions.}
In this paper, we provide a theoretical analysis of the certification process for randomized smoothing. We advocate that current limitations are not intrinsic to the scheme, but a byproduct of current certification methods. To do so, we first focus on the uniform distribution, and provide a framework to dissociate the information-gathering process (and so the certification) from the smoothing itself, whereas current certificates use the same noise distribution for both. This makes it possible to increase the quality of the certificates without affecting the standard accuracy. Our main findings can be summarized as follows.
\begin{enumerate}[parsep=0pt,itemsep=3pt,topsep=0pt,leftmargin=15pt]
  \item To exhibit the role of the local curvature on the quality of the certificate, we focus our study on two types of decision boundaries: cones of revolution and 2-piecewise linear. For these two types of boundaries, we quantify the gap between current certificates and the optimal one that randomized smoothing could provide. This validates our hypothesis by showing that the steeper the local curvature, the more suboptimal the certification. We also see that the gap becomes wider as the dimension grows, thus explaining the current limitations.
  \item Secondly, we show that the generalized Neyman-Pearson Lemma can be used to improve the quality of certificates by collecting information from several noise distributions at the same time. By separating this information-gathering step from the smoothing itself, better certification radius can be obtained without any further loss in standard accuracy, thus circumventing the current information theoretic limitations.
  \item Finally, we show that using this framework, it is possible to approximate the perfect certificate, for any decision boundaries, with arbitrary precision using information from a finite number of distributions.
  Although our proof requires a number of noises that increases exponentially with the dimension, we show that it is possible to drastically reduce the number of noises required when prior information is available on the classifier.
  We also provide several insights into the certification design process, especially computational issues, by providing techniques for reducing the dimension of the Neyman-Pearson set. More precisely, we show that it is possible to compute certificates without sampling in high-dimension by combining uniform and Gaussian distribution and leveraging the isotropic properties of the latter.
\end{enumerate}

\section{General framework for Randomized Smoothing}
\label{section:framework}

Let $\mathcal{X} = \mathbb{R}^d$ be our input space and $\Ycal = \{ 0, 1 \}$ our label space.
Let $\mathcal{H}$ be the class of measurable functions from $\Xcal$ to $\Ycal$, and $h\in \mathcal{H}$ be a base classifier.
Randomized smoothing creates a new classifier $h_{q_0}$ by averaging $h$ under some probability density function $q_0$ over $\mathcal{X}$. When receiving an input $x\in \mathcal{X}$, we compute the probability that $h$ takes value 1 for a point drawn from $q_0(\ \cdot\  - x)$:
\begin{equation*}
  p(x, h, q_0) = \int h(z) q_0(z-x) \diff z
\end{equation*}
The smoothed classifier then returns the most probable class.
\begin{definition}
  The $q_0$-randomized smoothing of $h$ is the classifier:
  \begin{equation*}
  h_{q_0} : x \mapsto \mathbb{1} \left\{ p(x, h, q_0) > \frac{1}{2} \right\}
  \end{equation*}
\end{definition}
In the rest of the paper, we will consider the points $x$ such that $p(x,h,q_0) > \frac{1}{2}$, so where the smoothed classifier returns 1.
The other case is exactly symmetrical.
An adversarial attack $\delta \in \mathcal{X}$ is a small crafted perturbation, such that $\norm{\delta} \leq \epsilon$ where $\epsilon$ is a small constant and $\norm{\ \cdot \ }$ is the Euclidean norm.
An adversarial attack is engineered such that:
\begin{equation*}
  h_{q_0}(x + \delta) \neq y, 
\end{equation*}
meaning $p(x + \delta, h, q_0) \leq \frac{1}{2}$.
In the following, we will define the robustness guarantee provided by randomized smoothing.

\begin{definition}[\textbf{$\epsilon$-certificate}]
An $\epsilon$-certificate for the $q_0$-randomized smoothing of $h$ at point $x$ is any lower bound on the probability after attack, i.e., some value $v \in \Rbb$ such that:
\begin{equation*}
    v \leq \inf\limits_{\delta\in B(0,\epsilon)} p(x+\delta, h, q_0)
\end{equation*}
A certificate $v$ is said to be successful if $v > \frac{1}{2}$.
\end{definition}

A successful $\epsilon$-certificate means that no attack of norm at most $\epsilon$ can fool the classifier.
This definition allows us to compare different certificates for the same smoothed classifier.

Certificates for randomized smoothing are usually ``black-box'', \ie we can only access the classifier $h$ through limited queries. This means giving a bound on the worst-case scenario for some class of functions $\mathcal{G}$ that we know contains $h$.

\begin{definition}[\textbf{Noised-based certificate}]
\label{definition:noised_based_certificate}
Let $\mathcal{Q}$ be a finite family of probability density functions.
Let $q_0 \in \mathcal{Q}$. The $\mathcal{Q}-$noise-based $\epsilon$-certificate for the $q_0$-randomized smoothing of $h$ at point $x$ is: 
\begin{equation*}
  \NC(h, q_0, x,\epsilon, \mathcal{Q}) = \inf\limits_{g \in \mathcal{G}_{\mathcal{Q}}} \inf\limits_{\delta \in B(0,\epsilon)} p(x+\delta, g, q_0) 
\end{equation*}
where:
\begin{equation*}
\mathcal{G}_{\mathcal{Q}}=\left\{ g \in \mathcal{H} \mid \forall q \in \mathcal{Q},\ p(x,g,q) = p(x,h,q) \right\}
\end{equation*}
\end{definition}

A noise-based certificate is a lower bound over all classifiers that exhibit the same response as $h$ to every noise distribution in $\mathcal{Q}$.
The certificate from \citet{cohen2019certified} is a particular type of noise-based certificate, where we only use one distribution to gather information, namely the same $q_0$ that is used for the smoothing, \ie, $\mathcal{Q} = \left\{ q_0\right\}$.

Note that there is a fundamental difference between $q_0$, the noise used for the smoothing, which is a part of the smoothed classifier $h_{q_0}$ used at test time, and the family $\mathcal{Q}$, which are noises used to analyze the base classifier $h$. Noises from $\mathcal{Q}$ are only used for information-gathering.

To evaluate the quality of this certificate, we now need a benchmark. For that, we will use the \textit{perfect certificate}, \ie, the tightest possible bound, that uses full information over the classifier $h$.

\begin{definition}[{\bf Perfect certificate}] The perfect 
\makebox{$\epsilon$-certificate} for the $q_0$-randomized smoothing of $h$ at point $x$ is:
\begin{equation*}
  \PC(h, q_0, x,\epsilon) = \inf\limits_{\delta\in B(0,\epsilon)} p(x+\delta, h, q_0)
\end{equation*}
\end{definition}

The underestimation between prefect certificates and noise-based certificates can now be defined as the difference between both bounds.
\begin{definition}[{\bf Underestimation of a noise-based certificate}]
\label{definition:diff_pc_nc}
Let $\mathcal{Q}$ be a finite family of probability density functions and let $q_0 \in \mathcal{Q}$ and $\epsilon > 0$. We define the underestimation function $\nu$ as:
\begin{equation*}
  \nu(h, q_0, x, \epsilon, \mathcal{Q}) = \PC(h, q_0, x, \epsilon) - \NC(h, q_0, x, \epsilon, \mathcal{Q})
\end{equation*}
\end{definition}
The function $\nu$ computes the difference between the perfect $\epsilon$-certificate and the noise-based $\epsilon$-certificate for an classifier $h$ with randomized smoothing $q_0$.

\section{Limitations of current certificates}
\label{section:limitations_rs}

In this section, we provide insight on the limitation of randomized smoothing.
Recall that single-noise certificates, \ie, $\NC(h, q_0, x, \epsilon, \{q_0\})$, use the same noise $q_0$ for smoothing and information-gathering.
However, this technique presents several weaknesses:
\begin{enumerate}[topsep=0pt,parsep=0pt,leftmargin=12pt]
  \item Since $\Qcal$ is small, the certificate is obtained as a worst-case over a large set of functions $\Gcal$. This will often make it significantly poorer than the optimal certificate $\PC$ for our specific classifier.
  \item With this kind of certificate, we have limited possibilities of optimization for the choice of the base classifier $h$. In particular, current certificates are blind to the ``local curvature'' of the decision boundary, as will be illustrated shortly.
  \item Since the distribution used for the smoothing and information-gathering operations is the same. This means that a larger variance leads to more information on the decision frontier (and thus to better certificates) but at the cost of a loss of standard accuracy.
\end{enumerate}

\subsection{Theoretical analysis with toy decision boundaries}

To illustrate these limitations, we provide a deeper analysis of the underestimation function defined in~\Cref{definition:diff_pc_nc}.
In the following, we focus on the case of uniform noise distributions on an $\ell_2$ ball.

\begin{figure}[t]
  \centering
  \begin{subfigure}[h]{0.30\textwidth}
    \centering
    \scalebox{0.3}{%
      \expandableinput{figures/cohen_underestimation_0}
    }%
    \label{figure:underestimate_cohen_certificate_a}
    \caption{}
  \end{subfigure}
   \hfill
  \begin{subfigure}[h]{0.30\textwidth}
    \centering
    \scalebox{0.3}{%
      \expandableinput{figures/cohen_underestimation_1}
    }%
    \label{figure:underestimate_cohen_certificate_b}
    \caption{}
  \end{subfigure}
  \hfill
  \begin{subfigure}[h]{0.30\textwidth}
    \centering
    \scalebox{0.3}{%
      \expandableinput{figures/cohen_underestimation_2}
    }%
    \label{figure:underestimate_cohen_certificate_c}
    \caption{}
  \end{subfigure}
  \caption{Illustration of \Cref{theorem:underestimate_cohen}.
  Figure (a) describes the probability $p(x, h, q_r) = q_r(\tikzcircle{myorange}) + q_r(\tikzcircle{mypurple})$.
  Figure (b) describes the perfect certificate as $\PC(h, q_r, x, \epsilon) = q_r(\tikzcircle{mypurple}) + q_r(\tikzcircle{myblue})$ whereas the single noised-based certificate is $\NC(h, q_r, x, \epsilon, \{q_r\}) = q_r(\tikzcircle{mypurple})$. 
  Figure (c) shows that blue zone increases with $\theta$, also, for $\theta = 0$, we have $q_r(\tikzcircle{myblue}) \xrightarrow[d \to \infty]{} 1$.}
  \label{figure:underestimate_cohen_certificate}
  \vspace{-0.3cm}
\end{figure}

{\em \textbf{Intuition of~\Cref{theorem:underestimate_cohen}.} For both conical and 2-piecewise-linear decision boundaries, the gap between uniform single-noise certificates and the uniform perfect certificate increases with the local curvature of the decision boundary. For high local curvatures, this gap becomes arbitrarily large as the dimension of the problem increases.
\Cref{figure:underestimate_cohen_certificate} illustrate this result in 2 dimensions.
}

\begin{restatable}[{\bf Underestimation of single noise-based certificates}]{theorem}{underestimateprop}
\label{theorem:underestimate_cohen} Let $\epsilon, r \in \Rbb_+^*$ such that $\epsilon \leq r$.
Let $\mathcal{Q} = \{q_0\}$ where $q_0$ is a uniform distribution over an $\ell_2$ ball $B_2^d(0,r)$. We denote $\theta_m = \arccos(\frac{\epsilon}{2r})$. For any $\theta \in \left[ 0, \theta_{m} \right]$, we denote by $h_\theta$ the classifier whose decision boundary is a cone of revolution of peak $0$, axis $e_1$ and angle $\theta$ where $(e_1,\dots,e_d)$ be any orthonormal basis of $\Rbb^d$.
Then, $\nu(h_\theta, q_0, 0, \epsilon, \mathcal{Q})$ is a continuous and decreasing function of $\theta$. Furthermore, we have
\begin{itemize}[itemsep=0pt,topsep=0pt,parsep=0pt]
    \item $\nu(h_0, q_0, 0, \epsilon, \mathcal{Q}) = 1 - I_{1-(\frac{\epsilon}{2r})^2} \left( \frac{d+1}{2}, \frac{1}{2} \right)$
    \item $\nu(h_{\theta_m}, q_0, 0, \epsilon, \mathcal{Q}) = 0$
\end{itemize}
where $I_z(a, b)$ is the incomplete regularized beta function.
Furthermore, for any $\epsilon, r$, $\nu(h_0, q_0, 0, \epsilon, \mathcal{Q}) \xrightarrow[d \to \infty]{} 1$.
The same result holds for 2-piecewise linear sets.
\end{restatable}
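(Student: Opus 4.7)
The plan is to make both $\PC$ and $\NC$ fully explicit via the radial symmetry of uniform-on-a-ball noise, then compare them as closed-form functions of $\theta$. For $\NC(h_\theta, q_0, 0, \epsilon, \{q_0\})$, I would apply the Neyman-Pearson lemma: since $q_0$ is uniform on $B_2^d(0, r)$, the likelihood ratio between $q_0$ and its $\delta$-translate takes only the values $0$, $1$, and $+\infty$, so the worst-case classifier $g \in \Gcal_{\{q_0\}}$ with $p(0, g, q_0) = p(0, h_\theta, q_0) =: \rho_0(\theta)$ is determined greedily: place $g = 1$ first on $B_2^d(0, r) \setminus B_2^d(\delta, r)$ (where it contributes nothing after the attack) and then on the lens $B_2^d(0, r) \cap B_2^d(\delta, r)$ as needed. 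A spherical-cap integration shows that $|B_2^d(0, r) \setminus B_2^d(\delta, r)| / |B_2^d(0, r)| = 1 - I_{1-(\epsilon/(2r))^2}((d+1)/2, 1/2)$ when $\|\delta\|=\epsilon$, giving the closed form $\NC = \max\{0,\, \rho_0(\theta) - (1 - I_{1-(\epsilon/(2r))^2}((d+1)/2, 1/2))\}$. For $\PC$, the rotational invariance of $h_\theta$ around $e_1$ forces the optimal attack $\delta^*$ onto the axis, reducing $\PC$ to a similar closed-form integral of the shifted ball intersected with the cone, again expressible via incomplete-beta functions. The underestimation $\nu = \PC - \NC$ is then an explicit function of $\theta$, so continuity is immediate.

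For the two boundary values, I would evaluate directly. At $\theta = 0$, the cone degenerates to a ray of measure zero, so $\rho_0(0) = 1$ and $\PC = 1$ (any attack preserves the full mass since the boundary is negligible), while $\NC = I_{1-(\epsilon/(2r))^2}((d+1)/2, 1/2)$; subtracting yields $\nu(h_0, q_0, 0, \epsilon, \mathcal{Q}) = 1 - I_{1-(\epsilon/(2r))^2}((d+1)/2, 1/2)$, which is precisely the normalized volume of $B_2^d(0, r) \setminus B_2^d(\delta^*, r)$. At $\theta = \theta_m = \arccos(\epsilon/(2r))$, the key geometric identity is that a cone generator of length $2r$ has axial projection $2r\cos\theta_m = \epsilon$, which is exactly the condition under which the cone's slanted side inside the shifted ball $B_2^d(\delta^*, r)$ coincides with the Neyman-Pearson halfspace cut; consequently the ``$1$''-regions of $h_{\theta_m}$ and the worst-case halfspace agree up to measure zero on $B_2^d(\delta^*, r)$, giving $\PC = \NC$ and $\nu = 0$.

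Monotonicity in $\theta$ follows either by differentiating the closed form of $\PC - \NC$, or geometrically by observing that opening the cone moves its boundary strictly closer to the Neyman-Pearson halfspace inside the shifted ball, shrinking the gap. The dimension limit $\nu(h_0, q_0, 0, \epsilon, \mathcal{Q}) \to 1$ reduces to showing $I_{1-c}((d+1)/2, 1/2) \to 0$ for any fixed $c \in (0, 1)$, which is a standard asymptotic of the regularized incomplete beta expressing the concentration of the uniform measure on $B_2^d$ near its equator. Finally, the 2-piecewise linear case uses the same template: such a decision boundary is the Cartesian product of a planar wedge with $\Rbb^{d-2}$, so a two-dimensional symmetry reduction places the optimal attack in the wedge's plane and the same spherical-cap integrals transpose unchanged.

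The main obstacle is the $\theta_m$ matching step: one must verify geometrically that $\arccos(\epsilon/(2r))$ is exactly the threshold at which the cone's curvature becomes invisible to a ball of radius $r$ shifted by $\epsilon$ along the axis, and that the resulting volume coincides precisely with the Neyman-Pearson expression rather than merely approximating it. Monotonicity is also delicate because $\PC$ and $\NC$ both decrease with $\theta$, so the argument must track the derivative of their difference rather than of each separately, which requires carefully decomposing the shifted-ball volume into the cap common to both and the ``curvature defect'' unique to the cone.
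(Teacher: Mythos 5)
Your overall strategy matches the paper's: make $\NC$ explicit via the Neyman--Pearson worst case for uniform noise (fill the crescent $B_2^d(0,r)\setminus B_2^d(\delta,r)$ first, then the lens), make $\PC$ explicit once the optimal attack is known, and conclude with spherical-cap volumes and the asymptotics of $I_z(a,\tfrac12)$. The endpoint computations at $\theta=0$ and the limit $d\to\infty$ are handled the same way in the paper.

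The genuine gap is your justification that the optimal attack lies on the cone's axis. You write that ``the rotational invariance of $h_\theta$ around $e_1$ forces the optimal attack $\delta^*$ onto the axis,'' but invariance under rotations about $e_1$ only tells you that $p(x+\delta,h_\theta,q_0)$ depends on $\delta$ through $(\delta_1,\|\delta_B\|)$ where $\delta_B$ is the component orthogonal to $e_1$; it does not tell you that the minimum is attained at $\|\delta_B\|=0$ (a rotationally invariant objective can perfectly well be minimized off-axis, e.g.\ for an annular decision region). The paper closes this with a dedicated lemma whose proof is the technical core of the theorem: after reducing to $\delta=\delta_1 e_1+\delta_2 e_2$ by symmetry, it constructs a reflection $v$ across the hyperplane $\{x_2=\delta_2/2\}$ and shows $v$ maps the region gained by the oblique attack injectively into the region lost, so the orthogonal component is always detrimental; a second reflection across $\{x_1=\delta_1+\Delta/2\}$ shows the attack value is (strictly) monotone in $\delta_1$, using $\epsilon<r$ to rule out the degenerate case. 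Without some argument of this kind your formula for $\PC$ is unjustified, and everything downstream (the value of $\nu$, the $\theta_m$ endpoint) depends on it.

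Two smaller remarks. First, once $\PC$ and $\NC$ are both written as volumes, the paper observes the clean set identity $\nu=\frac{1}{V}\Vol\bigl(B_2^d(\delta,r)\cap B_2^d(0,r)^c\cap \mathcal{C}(0,\theta)^c\bigr)$; monotonicity in $\theta$ is then immediate from $\mathcal{C}(0,\theta_1)\subset\mathcal{C}(0,\theta_2)$, with no need to track derivatives of $\PC-\NC$ separately as you propose. Second, the $\theta_m$ endpoint, which you correctly flag as an obstacle, reduces with that identity to showing the set $\{(x,\rho): (x-\epsilon)^2+\rho^2\le r^2,\ x^2+\rho^2>r^2,\ \rho> x\tan\theta_m\}$ is empty, which the paper verifies by a short quadratic-inequality computation rather than by matching the cone against a ``Neyman--Pearson halfspace'' (the worst-case classifier here is not a halfspace cut of the shifted ball, so that part of your heuristic should be discarded).
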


\begin{figure}[th]
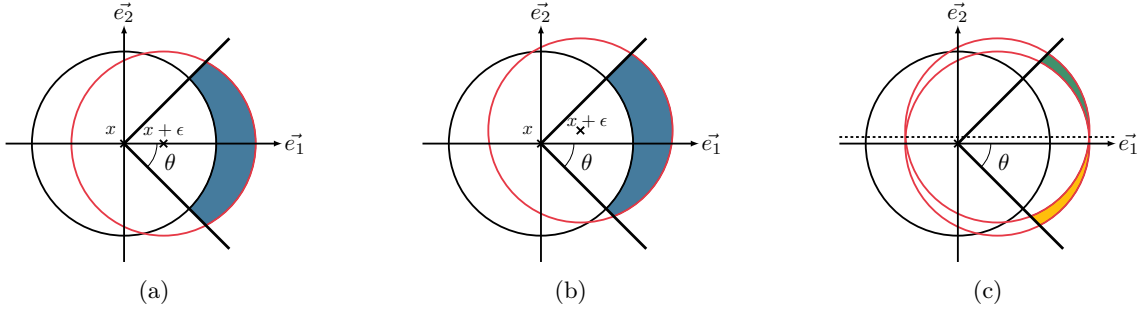

  \centering
  \hfill
  \begin{subfigure}[h]{0.32\textwidth}
    \centering
    \scalebox{0.35}{%
      \expandableinput{figures/optimal_attack_1}
    }%
    \caption{}
  \end{subfigure}
  \hfill
  \begin{subfigure}[h]{0.32\textwidth}
    \centering
    \scalebox{0.35}{%
      \expandableinput{figures/optimal_attack_2}
    }%
    \caption{}
    \label{figure:lemma_fig2_paper}
  \end{subfigure}
  \hfill
  \begin{subfigure}[h]{0.32\textwidth}
    \centering
    \scalebox{0.35}{%
      \expandableinput{figures/optimal_attack_3}
    }%
    \caption{}
    \label{figure:lemma_fig3_paper}
  \end{subfigure}
  \caption{Illustration of the optimal attack with a cone of revolution as decision boundary. The optimal attack of norm $\epsilon$ is the vector $\delta = [\epsilon \vec{e_1}, 0, \dots, 0]$. Figure (a) shows that there is always a gain by translating along $e_1$, Figure (b) shows the gain when translating along both $e_1$ and $e_2$, and finally, Figure (c) shows the difference. The loss incurred by the second translation, visible in the yellow zone, is greater than the gain (green zone). The argument of the proof is that the reflection of the blue zone through the dotted hyperplane is contained in the yellow zone.}
  \label{figure:optimal_attac_proof_paper}
\end{figure}

\paragraph{Sketch of proof of~\Cref{theorem:underestimate_cohen}.}
First, in order to define the perfect certificate, we show that the optimal attack against a conical decision boundary is the translation along its axis. This means that the attack defined by $\delta = [ \epsilon \vec{e_1}, 0, \dots, 0 ]^\top$ is optimal.
To prove that, we exploit the symmetry of the problem, as illustrated in
\Cref{figure:optimal_attac_proof_paper}.
To compute the difference between the perfect certificate and single-noise certificate, we here again used the rotational symmetry of the problem around axis $e_1$, to compute the volume in {\em hyper-cylindrical coordinates} as defined \Cref{definition:hyper_cylindrical_coordinates}.
$\square$

\begin{wrapfigure}{r}{0.40\textwidth}
    \centering
    \vspace{-0.5cm}
    \hspace{-1.0cm}
    \includegraphics[scale=0.45]{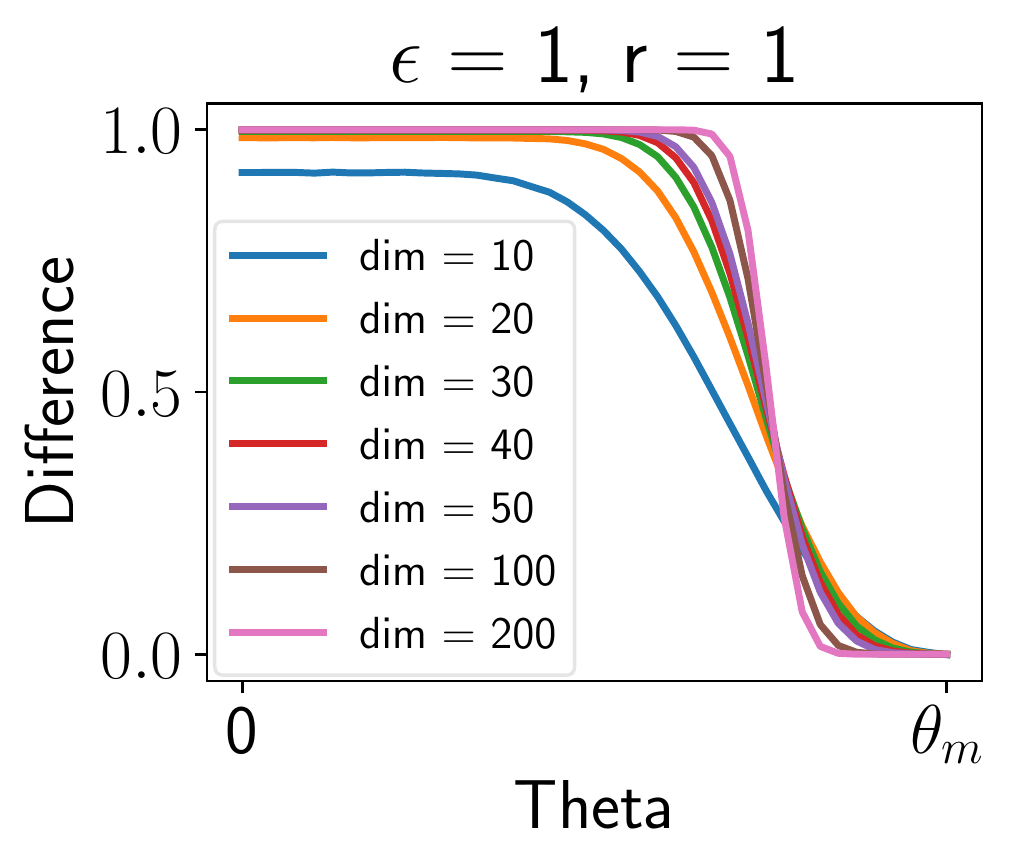}
    \caption{}
    \label{figure:underestimation}
\end{wrapfigure}

As the uniform distribution has a finite support, the corresponding single-noise certificates will be blind to everything outside that support.
It will thus always consider the worst-case scenario, namely that every point outside the ball is of the opposite class.
Since it has no information on the precise repartition of points in the ball, the single-noise certificate will also assume that every point ``lost'' after the translation (see the green left crescent zone outside the intersection in~\Cref{figure:underestimate_cohen_certificate_a}) was of class 1.

Hence, the difference between the perfect certificate and the single-noise certificate is the relative area of the blue zone.
The last part of the result shows that as the dimension gets higher, the single noise certificate can become arbitrarily bad.
In the extreme case ($\nu(\theta) = 1$), it returns 0 even though the classification task is trivial.
This is due to the fact that the volume of balls tends to concentrate on their surface in high dimensions, and so the relative weight of the crescent zone increases.
We used simulations to plot the evolution of the underestimation with the parameter $\theta$ and the dimension. We used Monte Carlo sampling with $200k$ samples for each $\theta$ and each dimension.

As we can see in~\Cref{figure:underestimation}, as dimension increases the cone represents a smaller portion of the space for a given $\theta$. It follows that the underestimation becomes large for almost every $\theta$ in high dimensions.
However, how likely are we to encounter these kinds of high local curvature situations for models trained on real datasets?

\subsection{Empirical analysis with real-world decision boundaries}

In the following, we show that we can identify the points where the single-noise classifier is suboptimal, by leveraging the information from several concentric noises.
More precisely, for the uniform distribution, at points where the single-noise certificate is optimal, the probability of being in class 1 will decrease as the radius of the noise increases.

{\em \textbf{Intuition of~\Cref{proposition:findingUnderestimate}.} 
For an unknown decision boundary, at any point where the single-noise certificate is optimal, the probability must locally decrease with the radius of the noise. It follows that at any point where the probability increases with the radius of the noise, the single-noise certificate cannot be optimal.}

\begin{restatable}[\textbf{Identifying points of non-zero underestimation}]{proposition}{findingUnderestimatePoints}
\label{proposition:findingUnderestimate}
For any $r>0$, let $q_r$ denote the uniform distribution over $B_2^d(0, r)$.
Let $r_1 > 0$, and $h$ a classifier. For any $x\in \mathbb{R}^d$, $\epsilon>0$, if $p(x, h, q_r)$ is not a decreasing function of $r$ over $\left[ r_1, r_1+\epsilon \right)$, then $\PC(h, q_{r_1}, x, \epsilon) > \NC(h, q_{r_1}, x, \epsilon, \{q_r\})$.
\end{restatable}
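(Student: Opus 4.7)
My plan is to prove the contrapositive: assuming $\PC(h, q_{r_1}, x, \epsilon) = \NC(h, q_{r_1}, x, \epsilon, \{q_{r_1}\})$, I show that $r \mapsto p(x, h, q_r)$ is a decreasing function on $[r_1, r_1+\epsilon)$. The first step is to apply the Neyman--Pearson lemma to the uniform noise $q_{r_1}$ and extract the rigid structure that tightness of the single-noise certificate imposes on $h$. The minimizer in the definition of $\NC$ is, up to a null set, a classifier whose class-$1$ region is the ``far crescent'' $B(x, r_1)\setminus B(x+\delta^*, r_1)$ together with an overflow into the overlap $B(x, r_1)\cap B(x+\delta^*, r_1)$, for some optimal attack direction $\delta^*$ with $\|\delta^*\|=\epsilon$. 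The equality $\PC = \NC$ then forces $h$ to coincide a.e.\ with this minimizer on $B(x, r_1)\cup B(x+\delta^*, r_1)$: in particular $h=1$ on the far crescent and $h=0$ on the near crescent $B(x+\delta^*, r_1)\setminus B(x, r_1)$.

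The second step is a radial derivative argument. Differentiating $p(x, h, q_r) = \tfrac{1}{V(B_r)}\int_{B(x,r)} h\,dy$ in $r$ yields
$$\frac{d}{dr}\, p(x, h, q_r) = \frac{S_r}{V(B_r)} \bigl(\tilde h(r) - p(x, h, q_r)\bigr),$$
where $\tilde h(r)$ is the spherical mean of $h$ on $\partial B(x,r)$. It therefore suffices to show $\tilde h(r) \le p(x, h, q_r)$ throughout $[r_1, r_1+\epsilon)$. On each sphere $\partial B(x, r)$, the constraint ``$h = 0$ on the near crescent'' cuts out a spherical cap of positive measure on which $h$ vanishes; combined with an averaging argument that also exploits ``$h = 1$ on the far crescent'' and the overflow mass pinned in the overlap, this bounds $\tilde h(r)$ above by a simple function of the spherical cap fraction at the relevant radius, which is compared against $p(x, h, q_r)$.

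The crux is then to promote this cap estimate into the pointwise inequality $\tilde h(r) \le p(x, h, q_r)$ for every $r \in [r_1, r_1+\epsilon)$. My proposed closure is a Gr\"onwall-type comparison on the ODE above, combined with a contradiction argument: were the inequality to fail first at some $r^\star \in (r_1, r_1+\epsilon)$, the excess class-$1$ mass responsible for the failure would lie in the ``free'' portion of the shell outside $B(x+\delta^*, r_1)$, and an infinitesimal rotation of the attack direction within $\partial B(0, \epsilon)$ would produce an attack $\delta'$ whose ball $B(x+\delta', r_1)$ captures a strictly positive fraction of this mass, decreasing $p(x+\delta', h, q_{r_1})$ below $\NC$ and contradicting $\PC = \NC$. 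The degenerate case $\NC = 0$ is handled separately by noting that it forces the support of $h$ to essentially avoid $B(x+\delta^*, r_1)$, which suffices to conclude decreasing-ness via a direct volume comparison.
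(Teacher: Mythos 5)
Your first two steps are sound, but the closure step contains a genuine logical impossibility, and the obstruction it runs into is real. By definition, $\NC(h,q_{r_1},x,\epsilon,\{q_{r_1}\})$ is an infimum over a class $\mathcal{G}_{\{q_{r_1}\}}$ that contains $h$ itself and over all attacks in $B(0,\epsilon)$, so $p(x+\delta',h,q_{r_1})\ge \NC$ for \emph{every} $\delta'$; no rotation of the attack direction can ever push this quantity ``below $\NC$''. To contradict $\PC=\NC$ you would instead need $p(x+\delta,h,q_{r_1})>\NC$ for \emph{all} $\delta$, including the original $\delta^*$ --- and the excess class-$1$ mass responsible for $\tilde h(r)>p(x,h,q_r)$ can sit entirely in $B_2^d(x,r_1+\epsilon)\setminus\bigl(B_2^d(x,r_1)\cup B_2^d(x+\delta^*,r_1)\bigr)$, a set of positive measure for $d\ge 2$ on which the tightness assumption places no constraint whatsoever. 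Your own first step makes this visible: equality pins $h$ down only on the two crescents and the overlap, so the pointwise inequality $\tilde h(r)\le p(x,h,q_r)$ does not follow from $\PC=\NC$, and no Gr\"onwall comparison can manufacture it. The same issue sinks the degenerate case: $h$ vanishing on $B_2^d(x+\delta^*,r_1)$ says nothing about the rest of the shell. (For instance, in $d=2$ with $x=0$, $r_1=1$, $\epsilon=\tfrac12$, $\delta^*=\tfrac12 e_1$, take $h=1$ on $B_2^d(0,1)\setminus B_2^d(\delta^*,1)$ and on $B_2^d(0,1.5)\setminus\bigl(B_2^d(0,1)\cup B_2^d(\delta^*,1)\bigr)$, and $h=0$ elsewhere: then $\PC=\NC=0$ while $p(0,h,q_r)$ strictly increases near $r=1.1$.)

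For comparison, the paper's proof never attempts the bridge you are building: it takes a $\{q_{r_1}\}$-worst-case classifier $g_{r_1}$ whose decision region $D_{r_1}$ is contained in $B_2^d(x,r_1)$ and shows that $p(x,g_{r_1},q_r)=\Vol(B_2^d(x,r_1)\cap D_{r_1})/\Vol(B_2^d(x,r))$ decreases in $r$ because the numerator is constant. That establishes monotonicity for $g_{r_1}$, and transfers to $h$ only under the implicit identification of $h$ with $g_{r_1}$ on all of $B_2^d(x,r_1+\epsilon)$ --- precisely the step you are (rightly) trying to justify. Your attempt shows why that identification cannot be derived from the stated hypotheses alone; some additional assumption on $h$ outside $B_2^d(x,r_1)\cup B_2^d(x+\delta^*,r_1)$ (or a structural restriction as in the conical and $2$-piecewise-linear settings of Theorem~\ref{theorem:underestimate_cohen}) is required.
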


\vspace{0.2cm}

\begin{wraptable}{r}{0.33\textwidth}
  \centering
  \vspace{-0.5cm}
  \caption{}
  \label{table:proportion_right_curvature_cifar10}
  \begin{tabular}{cc}
  \toprule
  $r_0$ / $r_1$ & $\zeta(r_0, r_1)$ \\
  \midrule
  0.25 / 0.30 & 40.9\% \\
  0.50 / 0.55 & 41.9\% \\
  0.75 / 0.80 & 41.3\% \\
  \bottomrule
  \end{tabular}%
\end{wraptable}

\paragraph{Suboptimality in current models.}
We leverage \Cref{proposition:findingUnderestimate} to evaluate the number of points where single-noise certificates might be optimal, by ``probing'' the decision boundary with different distributions. Let $q_0$ and $q_1$ be two uniform distributions with $r_0$ and $r_1$ as their respective radius such that $r_0 < r_1$ and let $\Dcal$ the set containing all the images of the CIFAR10 dataset~\cite{cifar10}. We aim at computing the proportion $\zeta$ of points where the probability increases with the radius of the noise:
\begin{equation*}
    \zeta(r_0, r_1) = \frac{1}{\left| \Dcal \right|} \sum_{x \in \Dcal} \mathbb{1} \left\{ p(x, h, q_0) < p(x, h, q_1) \right\}
\end{equation*}
This quantity measures the proportion of points where a better information-gathering scheme could improve the certificate.
In the context of randomized smoothing, it is common practice to also add noise during training in order to avoid distribution shift at test time. Therefore, to perform this experiment, we use three models trained by~\citet{yang2020randomized} with uniform distribution with respective radius of $0.25$, $0.50$ and $0.75$. We use the same radius $r_0$ as the one used during training and we use a $r_1 = r_0 + 0.05$.  
Table~\ref{table:proportion_right_curvature_cifar10} shows the results of this experiment.
We observe that nearly half of the corrected classified points have the probability growing suggesting that single-noise certificates would underestimate the real bound.

\vspace{0.2cm}

\section{Bypassing the limitations with noise-based information gathering}
\label{section:neyman_pearson}

We now know that single-noise certificates can heavily underestimate the perfect one for points where the local curvature is high.
This raises the question: is it possible for any noise-based certificate to bypass this issue?
We answer positively to this question by showing that gathering information from more noise distributions at certification time allows to get as close as desired to the perfect certificate, at the expense of high computational cost in the general case.
We then show that with more prior information on the decision boundary, the number of noises required and thus the computational cost can be drastically reduced.

\subsection{A framework for obtaining noise-based certificates}

\begin{restatable}[{\bf Generalized Neyman-Pearson lemma} \citet{chernoff1952generalization}]{lemma}{generalizedneymanpearson}
\label{lemma:generalized_neyman_pearson}
Let $q_0, \dots, q_n$ be probability density functions. For any $k_1,\dots,k_n >0$, we define the Neyman-Pearson set $\mathcal{S}_{\mathcal{K}} = \left\{q_0(x) \leq \sum_{i=1}^{n} k_i q_i(x) \right\}$ and the associated Neyman-Pearson function:
\begin{equation*}
  \Phi_{\mathcal{K}} = \mathbb{1}\left\{ \mathcal{S}_{\mathcal{K}}\right\}
\end{equation*}
Then for any function $\Phi : \mathcal{X} \rightarrow \left[ 0,1 \right]$ such that $\int \Phi q_i \diff \mu \geq \int \Phi_{\mathcal{K}} q_i \diff \mu$ for all $i \in \{1, \dots, n\}$, we have:
\begin{equation*}
\int \Phi_{\mathcal{K}} q_0 \diff \mu \leq \int \Phi q_0 \diff \mu
\end{equation*}
\end{restatable}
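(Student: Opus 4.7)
The plan is to follow the classical pointwise-comparison argument that makes the Neyman–Pearson lemma work. The central observation is that the function
\[
  \Psi(x) \;\defeq\; \bigl(\Phi_{\mathcal{K}}(x) - \Phi(x)\bigr)\Bigl(q_0(x) - \sum_{i=1}^n k_i\, q_i(x)\Bigr)
\]
is non-positive everywhere. First I would verify this by a two-case split on whether $x \in \mathcal{S}_{\mathcal{K}}$. On $\mathcal{S}_{\mathcal{K}}$ the second factor is $\leq 0$ by definition of the set, while $\Phi_{\mathcal{K}}(x)=1 \geq \Phi(x)$ since $\Phi$ takes values in $[0,1]$, so the first factor is $\geq 0$; hence $\Psi(x)\leq 0$. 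Off $\mathcal{S}_{\mathcal{K}}$ the inequality defining the set is strictly violated so the second factor is $> 0$, while $\Phi_{\mathcal{K}}(x)=0 \leq \Phi(x)$, making the first factor $\leq 0$; hence $\Psi(x)\leq 0$ again. This case analysis is the whole content of the proof and is the step I would write most carefully.

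Next I would integrate the inequality $\Psi \leq 0$ against the measure $\mu$ and rearrange:
\[
  \int (\Phi_{\mathcal{K}} - \Phi)\, q_0 \diff\mu \;\leq\; \sum_{i=1}^n k_i \int (\Phi_{\mathcal{K}} - \Phi)\, q_i \diff\mu.
\]
By the hypothesis $\int \Phi\, q_i \diff\mu \geq \int \Phi_{\mathcal{K}}\, q_i \diff\mu$ for $i=1,\dots,n$, each integral on the right-hand side is non-positive, and since $k_i > 0$ the whole sum is $\leq 0$. It follows that
\[
  \int \Phi_{\mathcal{K}}\, q_0 \diff\mu \;\leq\; \int \Phi\, q_0 \diff\mu,
\]
which is the stated conclusion.

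There is no real obstacle here: the only subtlety is the careful handling of the boundary $\{q_0 = \sum_i k_i q_i\}$, but because that set is included in $\mathcal{S}_{\mathcal{K}}$ (by the non-strict inequality in the definition) the first case already covers it, and the product $\Psi$ vanishes there regardless of the value of $\Phi$. Integrability of all the products is automatic because $\Phi$ and $\Phi_{\mathcal{K}}$ are bounded by $1$ and the $q_i$ are probability densities. I would conclude by noting that the argument uses neither the dimension nor any specific structure of the $q_i$, which is exactly what we need in order to combine heterogeneous noise distributions in the information-gathering framework of Section~\ref{section:neyman_pearson}.
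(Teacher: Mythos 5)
Your proof is correct and follows essentially the same route as the paper's: both rest on the pointwise sign observation that $(\Phi-\Phi_{\mathcal{K}})(q_0-\sum_{i} k_i q_i)\geq 0$ everywhere (you write the negated product, which is equivalent), followed by integrating and invoking the hypothesis together with $k_i>0$. You merely spell out the two-case verification of the sign condition more explicitly than the paper does, which is a reasonable addition rather than a different argument.
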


The generalized Neyman-Pearson lemma can be reformulated with isotropic noises, to obtain what we call {\em noise-based certificates}.
\begin{corollary}[\textbf{Noise-based certificates}]
\label{corollary:NPcertificate}
Let $\mathcal{Q} = \left\{q_0,\dots,q_n\right\}$ be a finite family of isotropic probability density functions, of same center. Let $\epsilon>0$, and any $\delta$ of norm $\epsilon$. If the $k_i$ are such that $\forall i, p(x,\Phi_{\mathcal{K}},q_i) \leq p(x,h,q_i)$, then we have:
\begin{equation*}
    \NC(h, q_0, x, \epsilon, \mathcal{Q}) \geq  p(x+\delta, \Phi_{\mathcal{K}}, q_0)
\end{equation*}
Furthermore, this becomes an equality if $\forall i, p(x,\Phi_{\mathcal{K}},q_i) = p(x,h,q_i)$
\end{corollary}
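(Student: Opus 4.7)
The plan is to apply the generalized Neyman-Pearson lemma (\Cref{lemma:generalized_neyman_pearson}) to the certification problem by carefully identifying its abstract densities with the translated noises that appear in the definition of $p(\cdot,\cdot,\cdot)$. Explicitly, for the $\delta$ of norm $\epsilon$ at hand I would take the lemma's ``$q_0$'' to be the density $z \mapsto q_0(z - x - \delta)$, corresponding to the noise used to evaluate the worst-case probability at the attacked point $x+\delta$, and each of the lemma's ``$q_i$'' for $i \geq 1$ to be $z \mapsto q_i(z - x)$, the densities at which the noise-based queries $p(x,\cdot,q_i)$ are performed. The isotropy and common-center hypothesis on $\mathcal{Q}$ ensures that these translated densities are well defined and that the lemma's $\Phi_{\mathcal{K}}$ becomes the indicator of $\{z : q_0(z-x-\delta) \leq \sum_{i=1}^n k_i\, q_i(z-x)\}$, matching the object used in the corollary.

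With this identification in place, I would pick any competitor $g \in \mathcal{G}_{\mathcal{Q}}$. By the defining property of $\mathcal{G}_{\mathcal{Q}}$ one has $p(x, g, q_i) = p(x, h, q_i)$ for every $i$, and combining this with the standing hypothesis $p(x, \Phi_{\mathcal{K}}, q_i) \leq p(x, h, q_i)$ yields $p(x, g, q_i) \geq p(x, \Phi_{\mathcal{K}}, q_i)$ for every $i \in \{1, \dots, n\}$. These are exactly the premises required by \Cref{lemma:generalized_neyman_pearson} applied to the translated densities, so the lemma delivers $p(x+\delta, g, q_0) \geq p(x+\delta, \Phi_{\mathcal{K}}, q_0)$. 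Taking the infimum of the left-hand side over $g \in \mathcal{G}_{\mathcal{Q}}$ (and, if desired, also over $\delta \in B(0,\epsilon)$) produces $\NC(h, q_0, x, \epsilon, \mathcal{Q}) \geq p(x+\delta, \Phi_{\mathcal{K}}, q_0)$, which is the claimed inequality.

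For the equality clause, it suffices to observe that when $p(x, \Phi_{\mathcal{K}}, q_i) = p(x, h, q_i)$ holds for every $i$ (including $i=0$), the function $\Phi_{\mathcal{K}}$ itself lies in $\mathcal{G}_{\mathcal{Q}}$ and is therefore a legitimate competitor in the infimum defining $\NC$; evaluating the previous bound at $g = \Phi_{\mathcal{K}}$ turns it into an equality. The main obstacle is essentially notational: one must reconcile the ``translation-free'' densities used to phrase \Cref{lemma:generalized_neyman_pearson} with the translated densities appearing in $p(\cdot,\cdot,\cdot)$, and the isotropy/common-center assumption on $\mathcal{Q}$ is precisely what legitimizes this relabelling. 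No further analytic work is needed beyond invoking the lemma itself.
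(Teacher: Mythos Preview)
Your proposal is correct and is precisely the natural elaboration of the paper's treatment, which states the corollary without a separate proof, presenting it as an immediate reformulation of \Cref{lemma:generalized_neyman_pearson} with isotropic noises. Your identification of the lemma's abstract densities with the translated noise densities $q_0(\cdot - x - \delta)$ and $q_i(\cdot - x)$, together with the observation that $\Phi_{\mathcal{K}} \in \mathcal{G}_{\mathcal{Q}}$ under the equality hypothesis, is exactly what the paper has in mind.
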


This means that by choosing the $k_i$ such that $p(x,\Phi_{\mathcal{K}},q_i)$ is as close as possible from $p(x,h,q_i)$ while remaining lower, we can get arbitrary close to $\NC(h, q_0, x, \epsilon, \mathcal{Q})$ using the Neyman-Pearson classifier $\Phi_{\mathcal{K}}$.

Note an important difference between multiple-noise certificates and single-noise ones: we use many noise distributions of varying amplitude \textit{at certification time}, but the noise used for smoothing at test time, $q_0$, remains constant. That is the strength of our framework: dissociating the information gathering process and the smoothing itself.

An advantage of this method is that since the function class $\mathcal{G}$ can only shrink with the number of noises used, a bound obtained with several noises will always be at least as good as the one proposed by~\citet{cohen2019certified}.

\subsection{Approximation results}

\begin{figure}
    \centering
    \begin{subfigure}[b]{0.48\textwidth}
        \centering
        \scalebox{0.40}{%
          \expandableinput{figures/decision_boundary}
        }%
        \caption{Illustration of Theorem~\ref{theorem:approx_theorem}. By querying the classifier with uniform noises on the squares of the grid, we can compute an approximation of the perfect certificate. As we refine the grid with smaller squares, the approximation becomes increasingly good, and converges to the perfect certificate.}
        \label{figure:decision_boundary}
    \end{subfigure}
    \hfill
    \begin{subfigure}[b]{0.48\textwidth}
        \centering
        \scalebox{0.38}{%
          \expandableinput{figures/gaussian_sonar}
        }%
        \caption{Illustration of \Cref{proposition:successive_noises}. We see that the difference of volume captured between two balls (blue, green and yellow zones) grows with $\theta$. For $\theta < \frac{\pi}{2}$, the volume growth is lower than for a hyperplane decision boundary (the cyan line) at the same distance. The difference is shown by the gray zones.}
        \label{figure:gaussian_sonar}
    \end{subfigure}
    \caption{}
\end{figure}

In this part, we will show that noise-based information is enough to approximate any non-pathological classifier:

{\em \textbf{Intuition of \Cref{theorem:approx_theorem}.} For any continuous decision boundary, it is possible to approximate the perfect certificate with arbitrary precision, using information from a finite family of noise distributions. The size of the family used increases with the desired precision. An illustration of this theorem is shown in \Cref{figure:decision_boundary}.
}

\begin{restatable}[{\bf General approximation theorem}]{theorem}{approximationtheorem}
\label{theorem:approx_theorem}
Let $q_0$ be the uniform noise on the $\ell_\infty$ ball $B_\infty (0 ,r)$ for some $r>0$.
For any $\epsilon>0$, $\xi > 0$ and $x \in \mathbb{R}^d$, there exists a finite family $\mathcal{Q}$ of probability density functions such that:
\begin{equation*}
  \NC(h, q_0, x, \epsilon, \mathcal{Q}) \geq \PC(h, q_0, x, \epsilon) - \xi
\end{equation*}
\end{restatable}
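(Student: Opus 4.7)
The plan is to exploit the fact that $p(x+\delta, h, q_0)$ depends on $h$ only through its restriction to $x+\delta+B_\infty(0,r)$, which, since $\|\delta\|_\infty \le \|\delta\|_2 \le \epsilon$, is uniformly contained in the fixed bounded region $R := x + B_\infty(0, r+\epsilon)$. I would tile $\mathbb{R}^d$ by axis-aligned cubes of a common side length $s>0$ (to be fixed later), keep the finitely many cubes $C_1, \dots, C_N$ that intersect $R$, and for each $j$ let $q_j$ be the uniform density on the translated cube $C_j - x$, so that $p(x, h, q_j)$ is precisely the average of $h$ over $C_j$. Taking $\mathcal{Q} = \{q_0, q_1, \dots, q_N\}$, the defining constraints $p(x, g, q_j) = p(x, h, q_j)$ force every $g \in \mathcal{G}_{\mathcal{Q}}$ to satisfy $\int_{C_j} g = \int_{C_j} h$ for each $j$.

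Next, I would bound $|p(x+\delta, g, q_0) - p(x+\delta, h, q_0)|$ uniformly in $\delta \in B(0,\epsilon)$ and $g \in \mathcal{G}_{\mathcal{Q}}$. Both quantities are normalized integrals of $g$ and $h$ over $x+\delta+B_\infty(0,r)$; I split this domain into cubes $C_j$ fully contained in it (whose contribution cancels by the constraint) and cubes only intersecting its boundary. Since $g,h$ take values in $\{0,1\}$, the boundary contribution is bounded by the total volume of those boundary cubes divided by $\Vol(B_\infty(0,r)) = (2r)^d$. The boundary of an $\ell_\infty$-ball of radius $r$ is a union of $2d$ flat faces of $(d-1)$-dimensional volume $(2r)^{d-1}$, so the number of cubes of side $s$ meeting it is at most $C_d (r/s)^{d-1}$ for some constant $C_d$, giving an error bound of order $C_d\, s/r$ that is uniform in $\delta$ and $g$. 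Choosing $s$ small enough to make this at most $\xi$ yields $p(x+\delta, g, q_0) \ge p(x+\delta, h, q_0) - \xi$; taking the infimum over $\delta \in B(0,\epsilon)$ and $g \in \mathcal{G}_{\mathcal{Q}}$ on both sides then delivers $\NC(h,q_0,x,\epsilon,\mathcal{Q}) \ge \PC(h,q_0,x,\epsilon) - \xi$.

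The main technical hurdle is making the boundary-cube count rigorous in arbitrary dimension, which is nonetheless routine thanks to the piecewise-flat geometry of $\partial B_\infty(0,r)$ and a minor enlargement of $R$ to absorb cubes that straddle its boundary. This counting also makes the dependence on $d$ explicit and shows that the number of noises required, $N \asymp ((r+\epsilon)/s)^d$, grows exponentially with $d$. Such exponential blow-up is not an obstacle to proving the theorem, which only asks for $\mathcal{Q}$ to be finite, but it is the genuine practical limitation of the construction, and motivates the subsequent discussion of how prior information on the classifier can drastically reduce the number of noises required.
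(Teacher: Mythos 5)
Your proof is correct, but it reaches the conclusion by a genuinely different mechanism than the paper. The paper uses the same cube grid and the same family $\mathcal{Q}$ of uniform-on-cube noises, but then invokes the generalized Neyman--Pearson lemma: it builds the explicit function $\Phi_{\mathcal{K}}$ that equals $1$ exactly on the cubes where $h=1$ almost surely, argues via \Cref{lemma:generalized_neyman_pearson} that $\int \Phi_{\mathcal{K}}\,\tilde q_0$ lower-bounds $p(x+\delta,g,q_0)$ for every admissible $g$, and identifies this quantity as a lower Riemann sum converging to $\int h\,\tilde q_0$. You bypass Neyman--Pearson entirely: the constraints $p(x,g,q_j)=p(x,h,q_j)$ pin down $\int_{C_j}g=\int_{C_j}h$ on every cube, so the contributions of all cubes fully inside the shifted support cancel exactly, and the discrepancy is confined to the $O_d\bigl((r/s)^{d-1}\bigr)$ cubes meeting the piecewise-flat boundary of $x+\delta+B_\infty(0,r)$, giving a uniform $O_d(s/r)$ error. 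Each approach buys something: the paper's route exhibits the (near) worst-case classifier $\Phi_{\mathcal{K}}$, which is the object the rest of the paper wants to compute, and fits the certification framework of \Cref{corollary:NPcertificate}; but its Riemann-sum step tacitly requires $h$ to be Riemann integrable and is vague about uniformity in $\delta$ (``we can choose $n$ such that, for any $\delta$\dots''). Your exact-cancellation argument needs no regularity on $h$ beyond measurability, makes the uniformity in $\delta$ and $g$ explicit, and in fact yields the two-sided bound $\lvert \NC - \PC\rvert\le\xi$. The boundary-cube count you defer is indeed routine. Both arguments share the same exponential blow-up $N\asymp((r+\epsilon)/s)^d$ in the size of $\mathcal{Q}$, which the paper also acknowledges.
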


\paragraph{Sketch of proof of~\Cref{theorem:approx_theorem}.}
The idea of this proof is to define a grid of disjoint squares covering the space. Then, we can construct a noise-based classifier that returns 1 only on the squares that are entirely contained in the decision region, \ie, a strict underestimate of the true classifier.
As the grid gets thinner, the approximation will then converge to the true classifier as a Riemann sum. $\square$

\Cref{theorem:approx_theorem} shows that it is possible to collect asymptotically perfect information on the decision boundary using only noise-based queries. The main improvement compared to the result of \citet{mohapatra2020higherorder} is that we reconstruct the base classifier itself, and not just the gaussian smoothed version of it, hence it works for any smoothing scheme.
This shows that the black-box approach to randomized smoothing certification is viable, and can bypass the theoretical limitations when using several noises instead of one.

Furthermore, if we have access to some prior information on the decision boundary, it will be possible to design much more efficient noise-based information gathering schemes.
In the following, we present a result demonstrating that we can obtain full information in the case of conical or 2-piecewise linear decision boundaries by using only a few concentric uniform noises.

\begin{definition}[\textbf{Noised-based certificate with prior information}]
\label{definition:noised_based_certificate_with_prior}
Let $\mathcal{Q}$ be a finite family of probability density functions, $\mathcal{F}$ be a family of classifiers (typically parameterized).
Let $q_0 \in \mathcal{Q}$. The $\mathcal{Q}-$noise-based $\epsilon$-certificate with prior information $\mathcal{F}$ for the $q_0$-randomized smoothing of $h$ at point $x$ is: 
\begin{equation*}
  \NCP(h, q_0, x,\epsilon, \mathcal{Q}, \mathcal{F}) = \inf\limits_{g \in \mathcal{G}_{\mathcal{Q},\mathcal{F}} } \inf\limits_{\delta \in B(0,\epsilon)} p(x+\delta, g, q_0) 
\end{equation*}
where:
\begin{equation*}
\mathcal{G}_{\mathcal{Q}, \mathcal{F}}=\left\{ g \in \mathcal{F} \mid \forall q \in \mathcal{Q},\ p(x,g,q) = p(x,h,q) \right\}
\end{equation*}
\end{definition}

{\em \textbf{Intuition of~\Cref{proposition:successive_noises}.} If we know that the decision boundary is conical or 2-piecewise-linear, then its parameter $\theta$ can be perfectly identified using only the information from two concentric noises. This allows the perfect certificate to be computed as a noise-based certificate.
\Cref{figure:gaussian_sonar} is an intuitive illustration of the proposition.
We can see that the volume of the cone captured by the balls is a strictly non-decreasing function of $\theta$. }

\begin{restatable}[{\bf Perfect certificate for conical decision boundaries}]{proposition}{successivenoises}
\label{proposition:successive_noises}
Let $\theta_0 \in \left[ 0,\frac{\pi}{2} \right]$. Let $h$ be a classifier whose decision boundary is the cone $C(0,\theta_0)$. Let $\mathcal{F}$ be the family of all classifiers with a decision boundary of the form $C(0,\theta)$. Then there exists uniform noises $q_1$ and $q_2$ such that, for any noise $q_0$, and any $x\in \mathbb{R}^d, \epsilon>0$:
\begin{equation*}
    \NCP(h,q_0,x,\epsilon,\left\{q_1,q_2\right\}, \mathcal{F}) = \PC(h,q_0,x,\epsilon)
\end{equation*}
\end{restatable}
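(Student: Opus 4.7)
The plan is to choose two uniform noises whose responses uniquely identify the unknown angle $\theta_0$ among all elements of $\mathcal{F}$, so that $\mathcal{G}_{\{q_1,q_2\},\mathcal{F}}$ reduces to the singleton $\{h\}$ and the infimum defining $\NCP$ collapses to the one defining $\PC$.

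First I would exploit the nestedness of the family: for $\theta < \theta'$ in $[0, \pi/2]$, the cones satisfy $C(0,\theta) \subsetneq C(0,\theta')$, so for any ball $B(x,r)$ the probability
\begin{equation*}
p(x, h_\theta, q_r) \;=\; \frac{\Vol\!\left(B(x,r)\cap C(0,\theta)\right)}{\Vol\!\left(B(x,r)\right)}
\end{equation*}
is a non-decreasing function of $\theta$. I would then promote this to strict monotonicity on $(0,\pi/2)$ by choosing the first radius $r_1$ large enough so that $B(x,r_1)$ contains an open neighborhood of the apex $0$, e.g.\ $r_1 > \|x\|$. For any $\theta < \theta'$ in $(0,\pi/2)$, the shell $C(0,\theta') \setminus C(0,\theta)$ meets this neighborhood in a set of positive Lebesgue measure, which forces $p(x, h_\theta, q_{r_1}) < p(x, h_{\theta'}, q_{r_1})$.

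The main obstacle is handling the degenerate extremes $\theta = 0$ (a half-line of measure zero, so $p = 0$) and $\theta = \pi/2$ (a half-space, where the probability may plateau near the top of the interval), as well as configurations of $x$ for which a single radius leaves a plateau in the middle of $[0, \pi/2]$. This is where the second noise $q_2$ enters: by choosing $r_2$ at a different scale than $r_1$, the joint probing map $\theta \mapsto (p(x, h_\theta, q_{r_1}), p(x, h_\theta, q_{r_2}))$ becomes injective on the whole interval — a second monotonicity argument, identical to the one above but applied to $B(x, r_2)$, breaks any residual plateau of the first coordinate. Continuity of both coordinates in $\theta$ reduces the verification to checking that the two coordinates cannot simultaneously plateau at a single value $\theta \neq \theta_0$, which follows from a direct volume comparison between concentric balls of different radii intersected with the same cone.

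Given injectivity, any $g \in \mathcal{G}_{\{q_1, q_2\}, \mathcal{F}}$ must be of the form $h_\theta$ with $p(x, h_\theta, q_i) = p(x, h_{\theta_0}, q_i)$ for $i = 1, 2$, forcing $\theta = \theta_0$ and hence $g = h$. Thus $\mathcal{G}_{\{q_1, q_2\}, \mathcal{F}} = \{h\}$, and
\begin{equation*}
\NCP(h, q_0, x, \epsilon, \{q_1,q_2\}, \mathcal{F}) \;=\; \inf_{\delta \in B(0,\epsilon)} p(x+\delta, h, q_0) \;=\; \PC(h, q_0, x, \epsilon).
\end{equation*}
The 2-piecewise linear case is handled identically: the family is again one-parameter with monotonically nested decision regions in the angle, so the same two-radius probing argument yields perfect identification of the unknown parameter and hence the same collapse of $\NCP$ onto $\PC$.
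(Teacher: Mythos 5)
Your proposal is correct and takes essentially the same route as the paper: the paper's proof rests on Lemma~\ref{lemma:successive_noises}, which shows that the cone volume captured between two concentric balls is a continuous, strictly increasing function of $\theta$, so the two noise responses identify $\theta_0$ exactly and $\mathcal{G}_{\{q_1,q_2\},\mathcal{F}}$ collapses to classifiers agreeing with $h$ almost everywhere, exactly as you argue. The only differences are cosmetic --- the paper's identifying statistic is the volume in the annulus $B(0,r_2)\setminus B(0,r_1)$ rather than the two single-ball probabilities --- and you share the paper's looseness in letting the radii depend on $x$ even though the statement quantifies the noises before $x$.
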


Also, by a direct extension of \Cref{proposition:successive_noises}, a small number of concentric noises are enough to obtain full information on general cones $C(c,\theta)$, in two steps:
\begin{itemize}[topsep=0pt,parsep=0pt,leftmargin=0.3cm]
    \item Evaluate the distance $c$ to the decision boundary by finding the threshold such that $p(x, h, q(r)) \neq 1$;
    \item Use two noises of radius $r_1$ and $r_2$ to identify the angle $\theta$ as presented in~\Cref{proposition:successive_noises}.
\end{itemize}
This hints at a more general result for piecewise linear decision boundaries (which includes all neural networks with ReLUs activations): it may be possible to gather perfect information using only a limited number of concentric noises to ``map'' the fractures of the decision boundary.

Designing certificates thus shifts from a classifier-agnostic problem to a more classifier-specific one: any prior information on the decision boundary can help guide the choices of noises used at certification time.
This also suggests that we could also choose the base classifier not only because of its efficiency, but to obtain some desirable properties that facilitate the certification process. This opens up a wide area of research.

\section{Discussion on computational cost}
\label{section:experiments}

In this section, we analyze the computational challenges of implementing noise-based certificates, and explore some avenues to reduce them. There are currently three main obstacles to computing noise-based certificates using the generalized Neyman-Pearson Lemma:

\begin{enumerate}[parsep=0pt,itemsep=0pt,topsep=0pt,leftmargin=0.5cm]
\item Computing integrals via Monte Carlo sampling in high-dimension can become very costly as this technique suffers from the curse of dimensionality.
\item When computing the integrals in high dimensions, numbers can become very small or very large, leading to computational instability.\footnote{For example, the volume of an $\ell_2$ ball in dimension 784 (MNIST dimension) is approximately equal to $\exp(-1503.90)$.}
\item Finally, fitting the $k_i$ to compute the generalized Neyman-Pearson set is a hard stochastic optimization problem.
\end{enumerate}

We show that we can bypass problems $1.$ and $2.$ when using Gaussian noise for information collection. Furthermore, uniform noise as an information gathering method considerably reduces problem $3$, although suffering from problems $1$ and $2$.

\begin{restatable}[{\em Computing the $k_i$ for uniform noises}]{proposition}{uniformKiCombinat}
\label{prop:uniformKiCombinat}
Let $q_0,\dots,q_n$ are uniform distributions, where $n \ll d$ there are only at most $2^n$ possible values for the generalized Neyman-Pearson set $\Scal$.
\end{restatable}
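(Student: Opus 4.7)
The plan is to exploit the two-valued nature of uniform densities to reduce the description of $\mathcal{S}_{\mathcal{K}}$ to a finite combinatorial object. Writing each density as $q_i(x) = \frac{1}{V_i}\mathbb{1}_{A_i}(x)$, where $A_i$ is the support of $q_i$ and $V_i = \Vol(A_i)$, the vector-valued map $x \mapsto (\mathbb{1}_{A_0}(x), \ldots, \mathbb{1}_{A_n}(x))$ takes values in $\{0,1\}^{n+1}$ and partitions $\mathbb{R}^d$ into at most $2^{n+1}$ measurable atoms of the $\sigma$-algebra generated by $A_0, \ldots, A_n$. Every $q_i$ is constant on each atom, so both sides of the defining inequality $q_0(x) \leq \sum_{i=1}^n k_i q_i(x)$ are constant there; consequently $\mathcal{S}_{\mathcal{K}}$ is necessarily a union of atoms, independently of the precise value of $\mathcal{K}$.

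The second step is to discard the atoms on which $\mathcal{K}$ has no influence. On any atom where $\mathbb{1}_{A_0}(x) = 0$ the inequality reads $0 \leq \sum_i k_i q_i(x)$, which is automatic for any nonnegative $\mathcal{K}$ (nonnegativity is built into the Neyman-Pearson construction). Such atoms therefore belong to $\mathcal{S}_{\mathcal{K}}$ regardless of $\mathcal{K}$. The genuinely $\mathcal{K}$-dependent atoms are those inside $A_0$, and they are in bijection with subsets $S \subseteq \{1, \ldots, n\}$ via $S = \{i \geq 1 : x \in A_i\}$, which gives at most $2^n$ of them. For each such atom indexed by $S$, inclusion in $\mathcal{S}_{\mathcal{K}}$ reduces to the single linear-threshold test $\sum_{i \in S} k_i/V_i \geq 1/V_0$. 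Hence the entire variable part of $\mathcal{S}_{\mathcal{K}}$ is encoded by choices among at most $2^n$ cells, and only that many distinct $\mathcal{S}_{\mathcal{K}}$ can arise as $\mathcal{K}$ ranges over $\mathbb{R}_+^n$.

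The main subtlety I would expect is the measure-theoretic bookkeeping required by Lemma~\ref{lemma:generalized_neyman_pearson}: the atomic decomposition is only defined up to null sets (cell boundaries), so one has to argue that counting atoms correctly enumerates the equivalence classes of $\mathcal{S}_{\mathcal{K}}$ modulo Lebesgue-null sets — in particular that whether a boundary point satisfies the inequality does not affect the integrals that matter for certification. A secondary point is to check that restricting to nonnegative $\mathcal{K}$ is without loss of generality, which follows directly from the standard Neyman-Pearson setup. Once these two points are in place, the cell-counting step is elementary, and the conclusion immediately turns the continuous optimization over $\mathcal{K} \in \mathbb{R}_+^n$ into a finite combinatorial search of size at most $2^n$, thereby addressing the third computational obstacle listed above.
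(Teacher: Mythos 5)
First, a point of reference: the paper's appendix contains no proof of Proposition~\ref{prop:uniformKiCombinat} at all (the appendix jumps from the proof of Theorem~\ref{theorem:approx_theorem} to that of Theorem~\ref{theorem:different_sigma}), so there is nothing to compare your argument against line by line. Your atomic decomposition is clearly the intended route: writing $q_i = \frac{1}{V_i}\mathbb{1}_{A_i}$, observing that all densities are constant on the cells of the partition generated by $A_0,\dots,A_n$, that the complement of $A_0$ always lies in $\mathcal{S}_{\mathcal{K}}$, and that the cell indexed by $S\subseteq\{1,\dots,n\}$ inside $A_0$ is included exactly when $\sum_{i\in S}k_i/V_i \geq 1/V_0$ — all of this is correct and is the substance of the result.

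The gap is in your last sentence. From ``$\mathcal{S}_{\mathcal{K}}$ is a union of cells drawn from a partition with at most $2^n$ cells'' it does not follow that ``at most $2^n$ distinct sets $\mathcal{S}_{\mathcal{K}}$ arise'': a priori a union of such cells can take up to $2^{2^n}$ values. What your threshold test actually shows is that the included cells form the family $\{S : \sum_{i\in S} k_iV_0/V_i \geq 1\}$, i.e.\ an upward-closed, nonnegative-weight threshold family (the monotonicity coming from $k_i>0$ is worth stating explicitly). The number of such families already exceeds $2^n$ for $n=2$ when the supports are in general position: with weights $w_i = k_iV_0/V_i$, the choices $(0.1,0.1)$, $(0.6,0.6)$, $(1.5,0.1)$, $(0.1,1.5)$, $(1.5,1.5)$ produce five distinct sets $\mathcal{S}_{\mathcal{K}}$, whereas $2^2=4$. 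So either the proposition must be read as ``$\mathcal{S}_{\mathcal{K}}$ is determined by a binary decision on each of at most $2^n$ cells'' (which your argument does establish, and which is all that is needed to turn the search into a finite, dimension-independent combinatorial one), or one needs extra structure on the supports — e.g.\ nested concentric balls, for which the nonempty cells form a chain and only $n+1$ distinct sets can occur — to recover the literal count of $2^n$. As written, the final counting step of your proof does not hold for general uniform supports.
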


This means that the exact values of the $k_i$ do not matter, only the possible values of the Neyman-Pearson set $\Scal$. The research of $\Scal$ thus shifts from a hard optimization problem to a combinatorial problem with only at most $2^n$ values to try where $n$ correspond to the number of noise and is usually much lower than the input dimension. Note that by taking $n = 1$, the certificate reduces to the single-noise certificate, and increasing the number of noises can only improve it.
Also, we should remark that smart choices of noises can make that combinatorial problem easier in practice, since the support of the distributions does not necessarily intersect with each other.

\begin{restatable}[{\bf Gathering information from gaussian noises}]{theorem}{differensigma} 
\label{theorem:different_sigma}
Let $q_0$ be any isotropic probability distribution, $\sigma_1, \dots, \sigma_n > 0$. For $i=1\dots n$ let $q_i \sim \mathcal{N}(0,\sigma_i)$ be the noises used for information gathering. Let $S_{k_1\dots k_n}$ be the corresponding Neyman-Pearson set, for any combination of parameters $k_1,\dots, k_n >0$.

Then $\mathbb{P}[\mathcal{N}(0,\sigma^2_i) \in S_{k_1\dots k_n}]$ can be computed using a Monte Carlo sampling in dimension 2 from a $\chi$ distribution with $d-2$ degrees of freedom.
\end{restatable}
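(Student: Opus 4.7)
The plan is to leverage the joint isotropy of $q_0$ and of each Gaussian $q_i$ to reduce the $d$-dimensional integral defining $\Pbb[\Ncal(0,\sigma_i^2) \in \Scal_{\Kcal}]$ to a two-dimensional Monte Carlo estimator, in which the remaining $(d-2)$ orthogonal directions are marginalized in closed form against a $\chi_{d-2}$ radius.

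First I would show that the Neyman-Pearson set is radially symmetric about the common center of the noises. Writing $q_0(z) = f_0(\|z\|)$ (by isotropy) and $q_i(z) = c_i \exp(-\|z\|^2/(2\sigma_i^2))$, the defining inequality $q_0(z) \le \sum_i k_i q_i(z)$ depends on $z$ only through $\|z\|$, so there exists $A \subseteq \Rbb_+$ with $\Scal_{\Kcal} = \{z \in \Rbb^d : \|z\| \in A\}$. Under mild regularity on $f_0$ (for instance log-concavity, which holds for any Gaussian or the uniform on an $\ell_2$ ball) the set $A$ is a finite union of intervals whose endpoints are the positive roots of a single one-dimensional equation in $\|z\|^2$, and can therefore be precomputed once per query.

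Next I would fix any orthogonal splitting $\Rbb^d = \Rbb^2 \oplus \Rbb^{d-2}$ and decompose $z = (z_1, z_2, z_\perp)$. Under $Z \sim \Ncal(0, \sigma_i^2 I_d)$, the triple $Z_1, Z_2 \sim \Ncal(0, \sigma_i^2)$ and $\|Z_\perp\|/\sigma_i \sim \chi_{d-2}$ is mutually independent, and the event $\{Z \in \Scal_{\Kcal}\}$ is equivalent to $\{Z_1^2 + Z_2^2 + \|Z_\perp\|^2 \in A^2\}$. Conditioning on $(Z_1, Z_2)$ turns this into a single constraint on $\|Z_\perp\|^2$, whose probability is an alternating sum of $\chi_{d-2}^2$ CDF values (equivalently, regularized incomplete gamma values) evaluated at the shifted endpoints of $A^2 - Z_1^2 - Z_2^2$. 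Averaging these closed-form conditional probabilities over Monte Carlo draws of the two-dimensional Gaussian vector $(Z_1, Z_2)$ produces the required estimator in dimension two.

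The main obstacle will be numerical rather than structural: the $\chi_{d-2}^2$ CDF must be queried at arguments of order $d$, a regime in which naive implementations over- or underflow (cf.\ the discussion's remark that ball volumes scale like $\exp(-1503.90)$ in $d=784$). This is addressed by performing all CDF evaluations in log-space via the log-regularized incomplete gamma function, which is numerically stable. Once this is in place, the estimator never instantiates a $d$-dimensional Gaussian sample, and both the curse-of-dimensionality issue and the numerical-stability issue are resolved simultaneously.
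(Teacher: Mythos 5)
There is a genuine gap: you have mis-identified the geometry of the Neyman--Pearson set. You assume all densities in the inequality $q_0 \le \sum_i k_i q_i$ share a common center, so that the set is a union of spherical shells $\{z : \|z\| \in A\}$. But the set the certificate actually requires (see Corollary~\ref{corollary:NPcertificate} and the displayed definition of $\mathcal{S}_{k_1,\dots,k_n}$ in the paper's proof) compares the \emph{shifted} smoothing density $q_0(\cdot - x - \delta)$, centered at the attacked point, against the information-gathering Gaussians $q_i(\cdot - x)$ centered at $x$. The attack vector $\delta$ breaks full rotational symmetry down to invariance under rotations fixing the axis of $\delta$ only, so membership in the set depends on two scalars --- the coordinate $\mu$ of $z-x$ along $\delta$ and the norm $r$ of its orthogonal component --- and not on $\|z-x\|$ alone. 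The paper's proof accordingly passes to hypercylindrical coordinates of axis $\delta$, with $\mu \sim \mathcal{N}(0,\sigma_i^2)$ and $r$ following a scaled $\chi$ law, and Monte Carlo samples the pair $(\mu,r)$. Your construction integrates over the wrong set and would not return $\mathbb{P}[\mathcal{N}(x,\sigma_i^2) \in \mathcal{S}_{k_1\dots k_n}]$ for the certification problem.

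A secondary symptom of the same error: under your spherical-symmetry assumption the target probability is exactly $\mathbb{P}[\|Z\| \in A]$, a finite alternating sum of $\chi_d$ CDF values, so no Monte Carlo sampling would be needed at all; the fact that the theorem asserts a genuinely two-dimensional sampling problem is itself a signal that the set cannot be fully isotropic. Your remarks on log-space evaluation of incomplete gamma functions are sensible implementation advice but do not repair the structural issue. (As a minor aside, the correct split is $1 + (d-1)$, so the orthogonal radius has $d-1$ degrees of freedom as in the paper's final formula; the ``$d-2$'' in the theorem statement appears to be an off-by-one in the paper itself, so your bookkeeping there is not the main problem.)
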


\textbf{Sketch of proof for \Cref{theorem:different_sigma}.} The key of this proof is again the invariance by rotation of the generalized Neyman-Pearson set around the direction $e_1$ of the attack. This allows us to separate $\|z \|^2$ into two components, one along $e_1$, which follows a 1-dimensional normal distribution, and one in $e_1^{\perp}$, whose norm follows a $\chi$ distribution with $d-2$ degrees of freedom. $\square$

This means that whatever the noise used at smoothing time, we can easily gather information from gaussian noises, since the Neyman-Pearson set needs only be sampled in dimension 2 to fit the $k_i$.

\section{Conclusion \& Future Works}
\label{section:conclusion}

\textbf{Computing the $k_i$.}
\Cref{theorem:different_sigma} successfully reduces the difficulty of the problem.
However, even with those simplifications, fitting the $k_i$ of the generalized Neyman-Pearson set remains a difficult stochastic optimization problem. Indeed, each step requires the computation of an integral via Monte Carlo sampling, and many steps may be necessary to reach the desired precision.
A potential direction of research would be to use the relaxation introduced by~\citet{dvijotham2020framework} for an easier to compute approximation of the Neyman-Pearson set.
Both techniques from~\citet{yang2020randomized} to compute ordinary Neyman-Pearson sets can also be extended to our general sets, for more computational efficiency.

\textbf{Choosing the base classifier $h$.}
Now that our certificates use more specific information on the classifier, it is possible to optimize the combination between the base classifier and the noise distributions used.
For example, we may adjust our training to ensure that the decision boundary has the highest possible curvature, since it is where our new certificates will shine.
The work from~\citet{salman2019provably}, which combined noise injection and adversarial training~\cite{madry2018towards} during the training, suggest that different training schemes can have an important impact on the certification performance.
Recently, this line of research has been studied and further improvements have been devised~\cite{zhai2020macer,jeong2020consistency,zhen2021simpler,wang2021pretraintofinetune}.
In the context of our framework, new training schemes could be devised to improve the local curvature at each point by adjusting the amount of noise injected.

\textbf{Conclusion.}
We have shown that the limitations of randomized smoothing are a byproduct of the certification method, namely the combination of the smoothing and information gathering steps.
We show that by dissociating the two processes, and using multiple distributions for the information gathering, it is possible to circumvent these limitations without affecting the standard accuracy of the classifier.
This opens up a whole new field of classifier-specific certification, with the guarantee of always performing better than single-noise certificates, and without any additional loss in standard accuracy.
Furthermore, it is now possible to optimize the choice of the base classifier, and use prior information in the certification process.
Much work remains to be done, in order to actually implement certificates using this framework.
The main difficulty is to compute the worst-case decision boundary, by fitting the constants $k_i$. 
This is hard to do using Monte Carlo sampling for most choices of noises. 
But the obstacles have now shifted from an impossibility result to computational challenges, restoring hope that randomized smoothing may someday be a definitive solution against adversarial attacks.

\newpage
\bibliographystyle{plainnat}
\bibliography{bibliography}

\clearpage
\appendix

\onecolumn

\section{Definitions}

In the following, we will consider the dimension $d \geq 3$.

\begin{definition}[\textbf{Hyper-cylindrical coordinates} This is an extension of the hyperspherical coordinates, defined in \citet{blumenson1960derivation}]
\label{definition:hyper_cylindrical_coordinates}
Let $e_1,\dots,e_d$ be an orthonormal base of $\mathbb{R}^d$, with corresponding Euclidean coordinates $(x_1,\dots,x_d)$. The hyper-cylindrical coordinates of axis $e_1$ are the following change of variable:
\begin{align}
    z &= x_1 \\
    \rho &= \sqrt{x^2_2 + \dots + x^2_d} \\
    \phi_i &= \arccot \left( \frac{x_i}{\sqrt{x_n^2+\dots+x_i^2}} \right) \\
    \phi_{d-1} &= 2 \arccot\left( \frac{x_{d-1} + \sqrt{x_{d-1}^2 + x_d^2}}{x_d} \right)
\end{align}
with the following reverse transformation:
\begin{align}
    x_1 &= z \\
    x_2 &= r \cos(\phi_1) \\
    x_i &= r \left( \prod_{i=1}^{i-2} \sin(\phi_i) \right) \cos(\phi_{i-1}) \\
    x_d &= r \left( \prod_{i=1}^{d-2} \sin(\phi_{i-2}) \right)
\end{align}
where $i\in \left\{ 2,\dots,d-1\right\}$.
This is a bijection, where $\phi_i \in \left[0,\pi\right]$, $r \in \mathbb{R}_+$, and $\phi_{d-1} \in \left[0,2\pi\right]$, with the convention that $\phi_k=0$ when $x_k,\dots,x_n = 0$.
Note that it is simply a change of variables to hyperspherical coordinates on the $d-1$ last variables.
\end{definition}

\begin{definition}[\textbf{Incomplete Regularized Beta}]
\label{definition:incomplete_beta_function}
Let $z \in \Rbb$, $a > 0$ and $b > 0$.
The Incomplete Regularized Beta Function is the function defined as:
\begin{equation} 
  I_z(a, b) = \frac{\Gamma(a+b)}{\Gamma(a)\Gamma(b)} \int_0^z t^{a-1} (1-t)^{b-1} \diff t
\end{equation}
\end{definition}

\begin{definition}[\textbf{$\ell_p$-ball}]
\label{definition:lp_ball}
The $\ell_p$-ball of dimension $d$, radius $r > 0$ and center $c \in \mathbb{R}^d$ is the set:
\begin{equation}
B_p^d(c, r) = \{ \forall x \in \mathbb{R}^d \mid \norm{x}_p \leq r \}
\end{equation}
\end{definition}

\begin{definition}[\textbf{Spherical Cap of an $\ell_p$-ball} \citep{li2011concise}]
\label{definition:spherical_cap}
A {\em spherical cap} is the portion of the sphere that is cut away by an hyperplane of distance $r - a$ from the origin.
The formula for the volume of the spherical cap is given by:
\begin{equation}
  \Vol(\SphereCap(a, r, d)) = \frac{1}{2} \Vol(B_p^d(0, r)) I_{\frac{2ra - h^2}{r^2}} \left( \frac{d+1}{2}, \frac{1}{2} \right)
\end{equation}
\end{definition}

\begin{definition}[\textbf{Cone of revolution}]
Let $c\geq0$. For any $x \in \mathbb{R}^d$, let $z, \rho, \phi_1,\dots, \phi_{d-2}$ be the hyper-cylindrical coordinates of axis $e_1$ (see~\Cref{definition:hyper_cylindrical_coordinates}). The cone of revolution of axis $e_1$, peaked at $c$ and of angle $\theta \in \left[ 0, \frac{\pi}{2} \right]$ is the set $\mathcal{C}(c,\theta)$, defined by:
\begin{equation*}
    \left\{
    \begin{array}{l|l}
        z \in \mathbb{R} \\
        \rho \in \mathbb{R}_{+} & z > c \text{\em \ and\ } \rho \leq z \tan \theta \\
        \phi_1,\dots,\phi_{d-2} \in \left[ 0,\pi \right].
    \end{array}
    \right\}
\end{equation*}
when $\theta \leq \frac{\pi}{2}$ (convex cone), and the set: 
\begin{equation*}
\mathcal{C}(c,\theta) = \left\{ z \geq c \text{\em \ or\ } \rho \geq -z \tan(\pi-\theta) \right\}.
\end{equation*}
for the concave cone ($\theta > \frac{\pi}{2}$).

We define a classifier with conical decision boundary as $h_\theta: x \mapsto \mathbb{1}\left\{ x \notin \mathcal{C}(c, \theta)\right\}$.
\end{definition}

\begin{definition}[{\bf 2-piecewise Linear set}]
Let $c\geq0$. Let $x_1,\dots x_n$ be the euclidean coordinates in the base $(e_1,\dots, e_n)$. The $2$-piecewise linear decision region of axis $e_1$ and $e_2$, of distance $c$ and angle $\theta \in \left[ 0, \frac{\pi}{2}\right]$ is the set:
\begin{equation*}
  \left\{ x_1,\dots,x_n \in \mathbb{R} \mid x_1 > c \text{\em \ and } \arctan(\frac{x_2}{x_1}) \in \left[ -\theta, \theta \right]\right\}
\end{equation*}
\end{definition}

\begin{definition}[{\bf Linear half-space}]
Let $c\geq0$. The half-space of translation $c$ is, in hypercylindrical coordinates of axis $e_1$, the set:
\begin{equation*}
H(c) = \left\{ \begin{array}{l|l}
        z \in \mathbb{R} \\
        \rho \in \mathbb{R}_{+} & z > c \\
        \phi_2,\dots,\phi_{d-1} \in \left[ 0,\pi \right].
    \end{array} \right\}
\end{equation*}
\end{definition}

\section{Proofs of \Cref{section:limitations_rs}}

\subsection{Proof of \Cref{theorem:underestimate_cohen}} 
\underestimateprop*

\begin{lemma}[\textbf{Limit of the regularized incomplete beta function}]
\label{lemma:limitBetaFunction}
Let $z \leq 1$, $b = \frac{1}{2}$ fixed.
Then $I_z(a,b) \xrightarrow[a \to \infty]{} 0$. 
\end{lemma}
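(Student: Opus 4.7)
The plan is to give a direct upper bound on $I_z(a,1/2)$ that decays to $0$ as $a\to\infty$, assuming $z<1$ strictly (otherwise $I_1(a,1/2)=1$ for all $a$, so the statement must be read as $z<1$; in the application to \Cref{theorem:underestimate_cohen} one has $z=1-(\epsilon/2r)^2 < 1$, so this is the relevant regime).

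\textbf{Step 1: bound the integrand.} On the interval $t \in [0,z]$ with $z<1$, the factor $(1-t)^{-1/2}$ is bounded above by $(1-z)^{-1/2}$. Hence
\begin{equation*}
  \int_0^z t^{a-1}(1-t)^{-1/2}\,\diff t \;\leq\; (1-z)^{-1/2}\int_0^z t^{a-1}\,\diff t \;=\; \frac{z^a}{a\sqrt{1-z}}.
\end{equation*}

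\textbf{Step 2: control the prefactor via Stirling.} With $b=1/2$ we have $\Gamma(b)=\sqrt{\pi}$, and by Stirling's formula,
\begin{equation*}
  \frac{\Gamma(a+1/2)}{\Gamma(a)} \;=\; \sqrt{a}\bigl(1+o(1)\bigr) \quad \text{as } a\to\infty.
\end{equation*}
In particular there is a constant $C>0$ such that $\Gamma(a+1/2)/\Gamma(a) \leq C\sqrt{a}$ for all $a\geq 1$.

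\textbf{Step 3: combine and conclude.} Multiplying Steps 1 and 2,
\begin{equation*}
  I_z(a,1/2) \;\leq\; \frac{C\sqrt{a}}{\sqrt{\pi}}\cdot\frac{z^a}{a\sqrt{1-z}} \;=\; \frac{C\,z^a}{\sqrt{\pi a(1-z)}}.
\end{equation*}
Since $0\leq z<1$, we have $z^a\to 0$ exponentially fast, and the $1/\sqrt{a}$ factor only helps. Therefore $I_z(a,1/2)\to 0$ as $a\to\infty$, as claimed.

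\textbf{Expected difficulty.} There is no real obstacle here: the only subtlety is to separate the boundary point $z=1$ from the interior (where the statement as written fails for $z=1$); everything else is routine tail control of a Beta integral. If a fully self-contained estimate is preferred to invoking Stirling, one can instead use the bound $\Gamma(a+1/2)\leq \Gamma(a+1)=a\,\Gamma(a)$, which yields $I_z(a,1/2)\leq z^a/\sqrt{\pi(1-z)}\to 0$, avoiding any asymptotic input altogether.
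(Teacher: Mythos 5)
Your proof is correct and follows essentially the same route as the paper's: bound $(1-t)^{-1/2}$ by $(1-z)^{-1/2}$ on $[0,z]$, integrate $t^{a-1}$ to get a $z^a$ factor, and control $\Gamma(a+\frac12)/\Gamma(a)\sim\sqrt{a}$, so that the exponential decay of $z^a$ wins. Your observation that the statement must be read with $z<1$ (since $I_1(a,\frac12)=1$) is a fair point; the paper's proof likewise silently restricts to $z<1$, which is the regime actually used in \Cref{theorem:underestimate_cohen}.
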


\begin{proof}
For any $z < 1, a > 1, b = \frac{1}{2}$, we have:
\begin{align}
    I_z(a, b) &= \frac{\Gamma(a+b)}{\Gamma(a)\Gamma(b)} \int_0^z t^{a-1} (1-t)^{b-1} \diff t \\
    &\leq \frac{\Gamma(a+b)}{\Gamma(a)\Gamma(b)} z^a (1-z)^{-\frac{1}{2}}
\end{align}
From \citet{olver2010nist}, Equation~$5.11.12$, we have the following approximation:
\begin{equation} \label{equation:ratio_gamma}
    \frac{\Gamma(a+b)}{\Gamma(a)} \underset{a \to +\infty}{\sim} a^{b}
\end{equation}
from \Cref{equation:ratio_gamma}, we can show that:
\begin{align}
    \frac{\Gamma(a+b)}{\Gamma(a)\Gamma(b)} &\underset{a \to +\infty}{\sim} \frac{a^b}{\Gamma(b)}
\end{align}
Finally, we have:
\begin{align}
    I_z(a,b) &\leq \frac{\Gamma(a+b)}{\Gamma(a)\Gamma(b)} z^a (1-z)^{-\frac12} \\
    &\underset{a \to +\infty}{\sim} \frac{a^{b}}{\Gamma(b)} z^{a} (1-z)^{-\frac12} \\
    &\xrightarrow[a \to +\infty]{} 0
\end{align}
which concludes the proof.
\end{proof}

In what follows, $V = \Vol(B_2^d(0,r))$

\begin{lemma} \label{lemma:optimal_attack_e1}
The optimal attack of size $\epsilon < r$ against $\mathcal{C}(0,\theta)$ is the translation fully along its axis $e_1$, \ie $\epsilon e_1$.
\end{lemma}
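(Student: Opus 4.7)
The plan is to show that $V(\delta) := \mathrm{Vol}(B(\delta,r) \cap \mathcal{C}(0,\theta))$ is maximized over $\{\delta : \|\delta\| \le \epsilon\}$ at $\delta = \epsilon e_1$. Since $h_\theta = \mathbb{1}\{x \notin \mathcal{C}(0,\theta)\}$ and $q_0$ is uniform on $B(0,r)$, minimizing $p(\delta, h_\theta, q_0)$ is equivalent to maximizing $V(\delta)$. By the rotational invariance of $\mathcal{C}(0,\theta)$ around $e_1$ and the spherical symmetry of the Euclidean ball, $V(\delta)$ depends only on $\delta_1 := \langle \delta, e_1\rangle$ and $\|\delta - \delta_1 e_1\|$; so without loss of generality $\delta = \delta_1 e_1 + \delta_2 e_2$ with $\delta_2 \ge 0$. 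The argument then splits into two reflection steps, mirroring Figures~\ref{figure:lemma_fig2_paper} and~\ref{figure:lemma_fig3_paper}: first kill the $e_2$ component, then push the $e_1$ component out to $\epsilon$.

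Step 1 (reduction to the axis). Fix $\delta_1$ and compare $A := B(\delta_1 e_1, r)$ with $B := B(\delta_1 e_1 + \delta_2 e_2, r)$. The identity $\|x - \delta_1 e_1\|^2 - \|x - \delta_1 e_1 - \delta_2 e_2\|^2 = 2\delta_2 x_2 - \delta_2^2$ gives $A \setminus B \subseteq \{x_2 < \delta_2/2\}$, $B \setminus A \subseteq \{x_2 > \delta_2/2\}$, and shows that the reflection $\sigma$ through the hyperplane $\{x_2 = \delta_2/2\}$ swaps the two balls. For any $y = (y_1,\dots,y_d) \in (B \setminus A) \cap \mathcal{C}(0,\theta)$, the reflection $\sigma$ fixes $y_1$ and the coordinates $y_3,\dots,y_d$, and sends $y_2 \mapsto \delta_2 - y_2$; since $y_2 > \delta_2/2 > 0$, we have $|\delta_2 - y_2| \le y_2$, hence $\sqrt{(\delta_2-y_2)^2 + y_3^2 + \cdots + y_d^2} \le \sqrt{y_2^2 + y_3^2 + \cdots + y_d^2} \le y_1 \tan\theta$. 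Combined with $\sigma(y)_1 = y_1 \ge 0$, this gives $\sigma(y) \in \mathcal{C}(0,\theta)$, and so a measure-preserving injection $(B \setminus A) \cap \mathcal{C} \hookrightarrow (A \setminus B) \cap \mathcal{C}$, yielding $V(A) \ge V(B)$. Thus for fixed $\delta_1$, the maximum of $V$ is attained at $\delta_2 = 0$.

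Step 2 (monotonicity along the axis) and conclusion. For $\delta_1' < \delta_1$, compare $A' := B(\delta_1 e_1, r)$ with $B' := B(\delta_1' e_1, r)$ via the reflection $\tau$ through $\{x_1 = (\delta_1 + \delta_1')/2\}$. For any $y \in (B' \setminus A') \cap \mathcal{C}$, $y_1 < (\delta_1 + \delta_1')/2$ and $y_1 \ge 0$ (from $y \in \mathcal{C}$); hence $\tau(y)_1 = \delta_1 + \delta_1' - y_1 \ge y_1 \ge 0$ while the transverse coordinates are unchanged, so $\sqrt{y_2^2 + \cdots + y_d^2} \le y_1 \tan\theta \le \tau(y)_1 \tan\theta$ and $\tau(y) \in \mathcal{C}$. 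The same injection argument gives that $\delta_1 \mapsto V(\delta_1 e_1)$ is non-decreasing, and combining with Step 1 identifies $\delta = \epsilon e_1$ as a maximizer of $V$ on $\{\|\delta\| \le \epsilon\}$. The 2-piecewise linear case is handled identically, since that set is also invariant under $x_2 \mapsto -x_2$ and monotone in $x_1$. The delicate point, and the place most likely to slip, is the radial domination in Step 1: it relies on the sign condition $y_2 > \delta_2/2$ that characterizes $B \setminus A$, which is exactly why the reflection must be taken through the perpendicular bisector of the two centers rather than through the hyperplane $\{x_2 = 0\}$.
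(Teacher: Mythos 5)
Your proof is correct and follows essentially the same strategy as the paper's: reduce to $\delta = \delta_1 e_1 + \delta_2 e_2$ by rotational symmetry, then apply the reflection through the perpendicular bisector hyperplane of the two ball centers, first to show the transverse component is detrimental and then to show monotonicity along $e_1$. The only differences are cosmetic (you phrase everything as maximizing the cone volume captured, and your sign bookkeeping is in fact cleaner than the paper's, which contains a sign typo at the end of its first reflection step).
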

\begin{proof}
Let $A = \Span(e_1), B = \Span(e_2,\dots, e_d)$. For any vector $u \in \mathbb{R}^d$, we write $u = u_A + u_B$ where $u_A$ and $u_B$ are the orthogonal projections of $u$ on $A$ and $B$ respectively.

First we will show that since the cone is invariant by rotation around $e_1$, the orthogonal component of the attack is as well. Without any attack, the probability of returning 1 at point $0$, is
\begin{equation}
\frac{1}{V} \int\limits_{\mathbb{R}^d}\mathbb{1}\left\{ \| x_A - 0 \|^2 + \| x_B - 0 \|^2 \leq r^2 \right\} \mathbb{1}\left\{ \| x_B\| \leq \| x_A\| \tan(\theta) \right\} \diff x
\end{equation}
where $V$ is the volume of the ball of radius $r$ and center $0$.

Let $\delta$ be any attack vector, $\| \delta \| = \epsilon$.
Attacking by $\delta$ amounts to shifting the center of the ball from $(0,0)$ to $(\delta_A,\delta_B)$.
Let
\begin{equation}
f(x,\delta) = \mathbb{1}\left\{ \| x_A - \delta_A\|^2 + \| x_B - \delta_B\|^2 \leq r^2 \right\} \mathbb{1}\left\{ \| x_B\| \leq \| x_A\| \tan(\theta) \right\}, \quad \quad \forall x \in \mathbb{R}^d
\end{equation}
Then the probability of returning 1 at point $0$, under attack $\delta$, is $p(\delta) = \frac{1}{V}\int\limits_{\mathbb{R}^d}f(x,\delta)\diff x$ where $V$ is independent of $\delta$.

Now let $g$ be any isometric mapping such that $\left.g\right|_A = \mathrm{Id}_A$. 
Let $\tilde{\delta} = g(\delta)$. Recall that as $g$ is an isometry hence $g$ and $g^{-1}$ are also affine, hence we have
\begin{align}
    f(g^{-1}(x), \delta) &= \mathbb{1}\left\{ \| g^{-1}(x_A) - \delta_A\|^2 + \| g^{-1}(x_B) - \delta_B\|^2 \leq r^2 \right\} \mathbb{1}\left\{ \| g^{-1}(x_B)\| \leq \| g^{-1}(x_A)\| \tan(\theta) \right\} \\
    &= \mathbb{1}\left\{ \| g^{-1}(x_A - \tilde{\delta}_A)\|^2 + \| g^{-1}(x_B - \tilde{\delta}_B)\|^2 \leq r^2 \right\} \mathbb{1}\left\{ \| g^{-1}(x_B)\| \leq \| g^{-1}(x_A)\| \tan(\theta) \right\} \\
    &= \mathbb{1}\left\{ \| x_A - \tilde{\delta}_A\|^2 + \| x_B - \tilde{\delta}_B\|^2 \leq r^2 \right\} \mathbb{1}\left\{ \| x_B\| \leq \| x_A\| \tan(\theta) \right\} \\
    &= f(x, \tilde{\delta}) = f(x,g^{-1}(\delta))
\end{align}
since $g^{-1}$ is an isometry.
It follows, by a change of variable in the integral:
\begin{align}
    p(\delta) &= \frac{1}{V}\int\limits_{\mathbb{R}^d}f(x,\delta) \diff x \\
    &= \frac{1}{V}\int\limits_{\mathbb{R}^d}f(g^{-1}(u),\delta)\diff u \\
    &= \frac{1}{V}\int\limits_{\mathbb{R}^d}f(u,g^{-1}(\delta))\diff u \\
    &=p(\tilde{\delta})
\end{align}

In particular, we can always choose $g$ such that the $g(\delta_B) = \delta_2 e_2$. In what follows, we will consider $\delta$ of the form $\delta = \delta_1 e_1 + \delta_2 e_2$ and show that the attack is optimal when $\delta_2 = 0$. For this, we will compute the difference between the attack translated by $\delta_1 e_1$ and the one translated by $\delta_1 e_1 + \delta_2 e_2$, to show that the orthogonal component actually reduces the efficiency of the attack. Let us first recall that 
\begin{align}
    p(\delta) &= \frac{1}{V}\int\limits_{\mathbb{R}^d} \mathbb{1}\left\{ \| x_1 - \delta_1\|^2 + \| x_2 - \delta_2\|^2 + \| x_3 \|^2 + \dots + \| x_d \|^2 \leq r^2 \right\} \mathbb{1}\left\{ \| x_B\| \leq \| x_A\| \tan(\theta) \right\}\diff x_1 \dots \diff x_d \\
    &= \frac{1}{V}\Vol(B(\delta,r)\cap C(0, \theta)).
\end{align}

\begin{figure}[th]
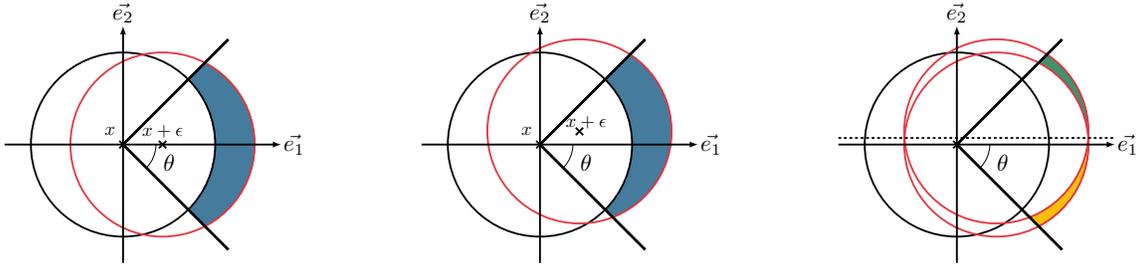

  \centering
  \hfill
  \begin{subfigure}[h]{0.32\textwidth}
    \centering
    \scalebox{0.35}{%
      \expandableinput{figures/optimal_attack_1}
    }%
  \end{subfigure}
  \hfill
  \begin{subfigure}[h]{0.32\textwidth}
    \centering
    \scalebox{0.35}{%
      \expandableinput{figures/optimal_attack_2}
    }%
    \label{figure:lemma_fig2}
  \end{subfigure}
  \hfill
  \begin{subfigure}[h]{0.32\textwidth}
    \centering
    \scalebox{0.35}{%
      \expandableinput{figures/optimal_attack_3}
    }%
    \label{figure:lemma_fig3}
  \end{subfigure}
  \caption{Illustration of the proof. The illustration on the left shows that there is always a gain by translating along $e_1$, the illustration in the middle shows the gain when translating along both $e_1$ and $e_2$, and finally, the illustration on the right shows the difference. The loss incurred by the second translation, visible in the yellow zone, is greater than the gain (green zone). The argument of the proof is that the symmetric of the blue zone is contained in the yellow zone.}
  \label{figure:optimal_attac_proof}
\end{figure}

Let $A = B(\delta,r) \cap C(0, \theta) \setminus B(\delta_1 e_1, r)$ and $D = B(\delta_1 e_1,r) \cap C(0, \theta) \setminus B(\delta, r)$.
\begin{align}
    p(\delta_1 e_1) - p(\delta) &= \frac{1}{V} \Vol(B(\delta,r)\cap C(0, \theta)) - \frac{1}{V}\Vol(B(\delta_1 e_1,r)\cap C(0, \theta)) \\ 
    &= \frac{1}{V} \left( \Vol (B(\delta,r)\cap C(0, \theta)\cap B(\delta_1 e_1,r) ) +  \Vol \left( B(\delta,r)\cap C(0, \theta)\setminus B(\delta_1 e_1,r) \right) \right. \\
    &\quad\quad - \left. \Vol (B(\delta_1 e_1,r)\cap C(0, \theta))\cap B(\delta,r)\right) -  \Vol \left( B(\delta_1 e_1,r)\cap C(0, \theta)\setminus B(\delta,r) \right) \\
    &= \frac{1}{V} \left( \Vol(D)-\Vol(A) \right)
\end{align}

We have $p(\delta_1 e_1) - p(\delta) = \frac{1}{V} \left( \Vol(D)-\Vol(A) \right)$. To show that it is positive, we will show that there is an isometry $v$ (preserving volumes) such that $v(A) \subset D$.

Let $v$ be the reflection across the hyperplan $\left\{ x \in \mathbb{R}^d, x_2 =\frac{\delta_2}{2}\right\}$. We have $v(x_1,\dots,x_d) = (x_1, \delta_2-x_2, x_3, \dots,x_d)$. For simplicity, for any $x \in \mathbb{R}^d$, we denote $v(x)=\tilde{x}$.

Let $x\in A$. We will show that $\tilde{x}$ is in $D$.
As $x$ is in $A$, we have
\begin{empheq}[left=\empheqlbrace]{align}
(x_1-\delta_1)^2 + (x_2-\delta_2)^2 + x_3^2 + \dots + x_d^2 \leq r^2 \label{eq1} \\
x_1 > 0 \label{eq2}\\
x_2^2 + \dots + x_d^2 \leq x_1^2 \tan^2(\theta) \label{eq3}\\
(x_1 -\delta_1)^2 + x_2^2 + x_3^2 + \dots + x_d^2 > r^2 \label{eq4} \end{empheq}

\Cref{eq1} states that $x \in B(\delta,r)$, \Cref{eq2} and \Cref{eq3} state that it is in the cone, whereas \Cref{eq4} says that $x \notin B(\delta_1 e_1, r)$.

Let us first show that $\tilde{x} \in C(0, \theta)$.
$\tilde{x_1} = x_1>0$, and subtracting \Cref{eq1} from \Cref{eq4} gives us $x_2^2 > (x_2-\delta_2)^2$.
It follows:
\begin{align}
    \tilde{x_2}^2 + \dots + \tilde{x_d}^2 &= (\delta_2 - x_2)^2 + x_3^2 + \dots + x_d^2 \\
    &< x_2^2 + \dots + x_d^2 \\
    &\leq x_1^2 \tan^2(\theta) \quad \quad \text{(from \Cref{eq3})} \\
    &= \tilde{x_1} \tan^2(\theta)  
\end{align}

Now we show that $\tilde{x} \in B(\delta_1 e_1,r)$.
\begin{align}
    (\tilde{x_1} -\delta_1)^2 + \tilde{x_2}^2 + \dots + \tilde{x_d}^2
    &= (x_1-\delta_1)^2 + (x_2-\delta_2)^2 + x_3^2 + \dots + x_d^2 \\
    &\leq r^2
\end{align}

Finally we show $\tilde{x} \notin B(\delta,r)$.
\begin{align}
    (\tilde{x_1} -\delta_1)^2 + (\tilde{x_2}-\delta_2)^2 + \dots + \tilde{x_d}^2
    &= (x_1-\delta_1)^2 + x_2^2 + x_3^2 + \dots + x_d^2 \\
    &> r^2 \quad \quad \text{(from \Cref{eq4})}
\end{align}

Combining the above, we get  $\tilde{x} \in D$. As $x$ was chosen arbitrarily in $A$, we get $v(A) \subset D$. As $v$ is isometric, we finally get $p(\delta_1 e_1) - p(\delta) =\frac{1}{V} \left( \Vol(A) - \Vol(D) \right) = \frac{1}{V} \left( \Vol(v(A)) - \Vol(D) \right) \leq 0$. The component orthogonal to the axis is detrimental to the attack.

We now only need to prove that $p(\delta)$ is strictly increasing with $\delta_1$. For what follow, we consider an attack $\delta_1 e_1$, and another one $((\delta_1 + \Delta) e_1)$. We will use the same technique:

Let $A = B(\delta_1 e_1, r) \cap C(0, \theta) \setminus B((\delta_1 + \Delta)e_1,r)$, and $D =  B((\delta_1 + \Delta)e_1,r) \cap C(0, \theta) \setminus B(\delta_1 e_1, r)$. 
We have $p((\delta_1+\Delta) e_1) - p(\delta_1 e_1) = \frac{1}{V} \left( \Vol(D) - \Vol(A) \right)$, and we will show that there is an isometry $v$ such that $v(A) \subset D$.

Let $v$ be the reflection across the hyperplane $\left\{ x \in \mathbb{R}^d \mid x_1 = \delta_1 + \frac{\Delta}{2} \right\}$.
Let $x = (x_1,\dots,x_d) \in A$.
It verifies the following equations:
\begin{empheq}[left=\empheqlbrace]{align}
(x_1-\delta_1)^2 + x_2^2 + x_3^2 + \dots + x_d^2 \leq r^2 \label{eq21} \\
x_1 > 0 \label{eq22}\\
x_2^2 + \dots + x_d^2 \leq x_1^2 \tan^2(\theta) \label{eq23}\\
(x_1 -\delta_1 - \Delta)^2 + x_2^2 + x_3^2 + \dots + x_d^2 > r^2 \label{eq24}   
\end{empheq}
$v(x_1,\dots,x_d) = (2\delta_1 + \Delta - x_1,x_2,\dots,x_d) = \tilde{x}$.

First of all, subtracting \Cref{eq24} from \Cref{eq21} gives:
\begin{align}
    (x_1-\delta_1-\Delta)^2 > (x_1-\delta_1)^2 &\Rightarrow \Delta^2 - 2 \Delta(x_1-\delta_1) > 0 \\
    &\Rightarrow x_1 < \delta_1 + \frac{\Delta}{2}
\end{align}

Let us show $\tilde{x} \in D$.
\begin{align}
    \tilde{x_1}^2\tan^2(\theta) &= (2\delta_1 + \Delta - x_1)^2\tan^2(\theta) \\
    &\geq \left( 2\delta_1 + \Delta - \delta_1 - \frac{\Delta}{2} \right)^2 \tan^2(\theta) \\
    &= \left( \delta_1 + \frac{\Delta}{2} \right)^2 \tan^2(\theta) \\
    &\geq x_1^2 \tan^2(\theta) \\
    &\geq x_2^2 + \dots + x_d^2 \\
    &= \tilde{x_2}^2 + \dots + \tilde{x_d}^2
\end{align}

Hence $\tilde{x} \in C(0, \theta)$. Then:
\begin{align}
    (\tilde{x_1}-\delta_1 - \Delta)^2 + \tilde{x_2}^2 + \dots + \tilde{x_d}^2 &= (x_1-\delta_1)^2 + x_2^2 + \dots + x_d^2 \\
    &\leq r^2
\end{align}

Hence $\tilde{x} \in B((\delta_1 + \Delta)e_1,r)$. Finally,
\begin{align}
    (\tilde{x_1}-\delta_1)^2 + \tilde{x_2}^2 + \dots + \tilde{x_d}^2 &= (x_1-\delta_1 - \Delta)^2 + x_2^2 + \dots + x_d^2 \\
    &> r^2
\end{align}
and we have $\tilde{x} \notin B(\delta_1 e_,r)$. We have thus shown that $\Vol(D) \geq \Vol(A)$, and so the attack is increasing in $\delta_1$.

We will now show that the increase is strict. For that, we show that there exists points in $D$ whose image by $v$ is not in $A$.

Recall that $D = B(\delta_1+\Delta,r) \cap C(0,\theta) \setminus B(\delta_1,r)$ is defined by the following set of equations:

\begin{empheq}[left=\empheqlbrace]{align}
(x_1-\delta_1 - \Delta)^2 + x_2^2 + x_3^2 + \dots + x_d^2 \leq r^2 \label{eq31} \\
x_1 > 0 \label{eq32}\\
x_2^2 + \dots + x_d^2 \leq x_1^2 \tan^2(\theta) \label{eq33}\\
(x_1 -\delta_1)^2 + x_2^2 + x_3^2 + \dots + x_d^2 > r^2 \label{eq34}   
\end{empheq}

Let us reason by contradiction, and assume that $v(D) \subset A$ 

This means that for all points verifying the previous set of equations, we also have $ x_2^2+\dots + x_d^2 \leq \tilde{x}_1^2 \tan^2(\theta)$, i.e.

\begin{equation}
    \label{eq35}
    x_2^2+\dots + x_d^2 \leq (2\delta_1 + \Delta - x_1)^2 \tan^2(\theta)
\end{equation}

Let us define $u = x_1 - \delta$ and $b=\delta+\Delta$. Combining \cref{eq35} and \cref{eq34} gives:

\begin{align*}
r^2 &\leq (x_1-\delta)^2 + (2\delta+\Delta-x_1)^2 \tan^2(\theta)\\
    &\leq u^2 + (b- u)^2 \tan^2(\theta) \\
    &\leq u^2 + (b^2 + u^2 -2bu)\tan^2(\theta) \\
\end{align*}
This implies:
\begin{align*}
    & (1+\tan^2(\theta))u^2 -2b\tan^2(\theta)u + b^2-r^2 > 0 \\
    &\Rightarrow b^2 - r^2 > b^2\frac{tan^2(\theta)}{1 + \tan^2(\theta)}\\
    &\Rightarrow r^2 < b^2 (1-\sin^2(\theta)\tan^2(\theta)) \\
    &\Rightarrow r^2 < \delta^2 (1-\tan^2(\theta)) \\
    &\Rightarrow r^2 < \delta^2
\end{align*}
Which is a contradiction since we consider attacks of size $\delta = \epsilon < r$.

We have shown that any component of the attack that is orthogonal to $e_1$ is detrimental to the attack, and that an increase along $e_1$ benefits the attack. It follows that the optimal attack of size at most $\epsilon$ is $\epsilon e_1$.
\end{proof}

\begin{figure}[th]
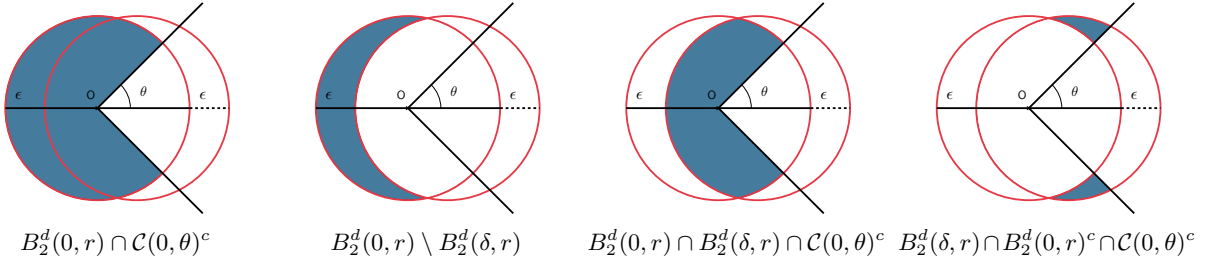

  \centering
  \hfill
  \begin{subfigure}[h]{0.24\textwidth}
    \centering
    \scalebox{0.35}{%
      \expandableinput{figures/cohen_underestimation_proof_1}
    }%
    \caption*{$B_2^d(0, r) \cap \mathcal{C}(0, \theta)^c$}
  \end{subfigure}
  \hfill
  \begin{subfigure}[h]{0.24\textwidth}
    \centering
    \scalebox{0.35}{%
      \expandableinput{figures/cohen_underestimation_proof_2}
    }%
    \caption*{$B_2^d(0, r) \setminus B_2^d(\delta, r)$}
  \end{subfigure}
  \hfill
  \begin{subfigure}[h]{0.24\textwidth}
    \centering
    \scalebox{0.35}{%
      \expandableinput{figures/cohen_underestimation_proof_3}
    }%
    \caption*{$B_2^d(0, r) \cap B_2^d(\delta, r) \cap \mathcal{C}(0, \theta)^c$}
  \end{subfigure}
  \hfill
  \begin{subfigure}[h]{0.24\textwidth}
    \centering
    \scalebox{0.35}{%
      \expandableinput{figures/cohen_underestimation_proof_4}
    }%
    \caption*{$B_2^d(\delta, r) \cap B_2^d(0, r)^c \cap \mathcal{C}(0, \theta)^c$}
  \end{subfigure}
  \hfill
  \caption{Illustration of proof of \Cref{theorem:underestimate_cohen}. The worst-case classifier using only the information $p(\theta)$ assumes that the zone in the second figureis entirely lost, whereas for the perfect certificate, the blue zone in the fourth figure is not lost. That zone grows as $\theta$ shrinks.}
  \label{figure:underestimate_cohen_proof}
\end{figure}

\begin{proof}[Proof of \Cref{theorem:underestimate_cohen}]

Let $r > 0$, $0 < \epsilon \leq r$, $\delta = [\epsilon, 0, \dots, 0]^\top \in \mathbb{R}^d$, $\theta \in [0, \theta_m]$ with $\theta_m = \arccos(\frac{\epsilon}{2r})$.
Let $\mathcal{C}(0,\theta)$ be a cone of revolution of peak 0, axis $e_1$ and angle $\theta$.
Let us consider all functions $h_\theta$ whose decision boundary is the cone of revolution $\mathcal{C}(0,\theta)$.
The probability $p(x, h_\theta, q_0)$ of returning class 1 at the point $0$ for the classifier $h_\theta$ after smoothing by $q_0$ is:
\begin{align}
  p(x, h_\theta, q_0) &= \int\limits_{\mathbb{R}^d} \frac{\mathbb{1}\left\{ x \in B_2^d(0,r)\right\}}{\Vol(B_2^d(0, r))}
  \mathbb{1}\left\{x \in \mathcal{C}(0, \theta)^c\right\} \diff x\\
  &= \frac{\Vol(B_2^d(0, r) \cap \mathcal{C}(0, \theta)^c)}{\Vol(B_2^d(0, r))} 
\end{align} 

Single-noise certificates uses the fact that in the worst-case scenario, all the volume lost during the translation was in class 1, and all the volume gained is in the class $0$ (see proof of \citet{yang2020randomized} [Theorem I.19]) This gives:
\begin{align}
  \NC(h_\theta, q_0, x, \epsilon, \{q_0\} ) &= p(x, h_\theta, q_0) - \frac{1}{V} \left( \Vol(B_2^d(0, r) \setminus B_2^d(\delta, r)) \right) \label{eq:worstCase}\\
  &= \frac{1}{V} \left( \Vol(B_2^d(0, r) \cap \mathcal{C}(0, \theta)^c) - \left( \Vol(B_2^d(0, r) \setminus B_2^d(\delta, r)) \right) \right) \\
  &= \frac{1}{V} \Vol( B_2^d(0, r) \cap B_2^d(\delta, r) \cap \mathcal{C}(0, \theta)^c)
\end{align} 
In \Cref{eq:worstCase}, we say that in the worst case scenario all the volume lost during the translation was in class 1, and all volume gained was in class 0, hence we loose everything outside the intersection.

This corresponds to \Cref{figure:lemma_fig2}: the zone that is preserved after translation for the noise certificate is the blue one in the third figure. We will show that the perfect certificate also preserves the blue zone in the fourth figure.

Since by \Cref{lemma:optimal_attack_e1} the optimal attack against the cone is the translation along its axis, the perfect certificate for the probability $p$ will be defined under the attack $\delta$:
\begin{equation}
  \PC(h_\theta, q_0, x, \epsilon) = \frac{1}{V} \Vol(B_2^d(\delta, r) \cap \mathcal{C}(0, \theta)^c)
\end{equation}
The difference between the perfect certificate and the single-noise based certificate (as in \Cref{definition:diff_pc_nc}) is:
\begin{align}
  \nu(h_{\theta}, q_0, x, \epsilon, \left\{ q_0 \right\}) &= \frac{1}{V} \left( \Vol(B_2^d(\delta, r) \cap \mathcal{C}(0, \theta)^c) - \Vol( B_2^d(0, r) \cap B_2^d(\delta, r) \cap \mathcal{C}(0, \theta)^c) \right) \\
  &= \frac{1}{V} \Vol( B_2^d(\delta, r) \cap \mathcal{C}(0, \theta)^c \setminus \left( B_2^d(0, r) \cap B_2^d(\delta, r) \cap \mathcal{C}(0, \theta)^c \right) ) \\
  &= \frac{1}{V} \Vol( B_2^d(\delta, r) \cap \mathcal{C}(0, \theta)^c \cap \left( B_2^d(0, r) \cap B_2^d(\delta, r) \cap \mathcal{C}(0, \theta)^c \right)^{c} ) \\
  &= \frac{1}{V} \Vol( B_2^d(\delta, r) \cap \mathcal{C}(0, \theta)^c \cap \left( B_2^d(0, r)^c \cup B_2^d(\delta, r)^c \cup \mathcal{C}(0, \theta) \right) ) \\
  &= \frac{1}{V} \Vol( B_2^d(\delta, r) \cap B_2^d(0, r)^c \cap \mathcal{C}(0, \theta)^c ) \\
  &= \frac{A}{V}\int\limits_{x=0}^{\infty} \int\limits_{\rho=x \tan(\theta)}^{\infty}
  \mathbb{1}\left\{ (x-\epsilon)^2 + \rho^2 \leq r^2 \right\} \mathbb{1} \left\{x^2+\rho^2>r^2 \right\} \rho^{d-2}\diff x \diff \rho
\end{align}

where $A=\int\limits_{\substack{\phi_1, \dots, \phi_{d-3} \in [-\pi, \pi] \\ \phi_{d-2} \in [0, 2\pi]}} \prod_{k = 1}^{d-2} \sin^k \phi_{d-1-k} \diff \phi_{1} \dots \diff \phi_{d-2}$.

It follows that $\nu$ is a continuous function with respect to $\theta \in \left[ 0, \theta_{m} \right]$. 
It is decreasing, since $\mathcal{C}(0, \theta_1) \subset \mathcal{C}(0, \theta_2)$ when $\theta_1 < \theta_2$.

Furthermore, when $\theta = 0$:
\begin{align}
  \nu(h_{0}, q_0, x, \epsilon, \left\{ q_0 \right\}) &= \frac{1}{V} \Vol( B_2^d(\delta, r) \cap B_2^d(0, r)^c ) \\
  &= \frac{1}{V} \Vol( B_2^d(\delta, r) \setminus B_2^d(0, r)) \\
  &= \frac{1}{V} \Vol( B_2^d(\delta, r) ) - \Vol( B_2^d(0, r) \cap B_2^d(\delta, r) ) \label{eq:vol_interesction_ball_1} \\
  &= \frac{1}{V} \left( \Vol( B_2^d(\delta, r) - 2 \Vol(\SphereCap(r-\frac{\epsilon}{2}, r, d)) \right) \label{eq:vol_interesction_ball_2} \\
  &= 1- I_{1 - \left(\frac{\epsilon}{2r}\right)^2} \left( \frac{d+1}{2}, \frac{1}{2} \right) 
\end{align}
where the step from \Cref{eq:vol_interesction_ball_1} to \Cref{eq:vol_interesction_ball_2} is due because the intersection of both spheres is the union of two spherical caps.

Moreover, from \Cref{lemma:limitBetaFunction}, we have $\nu(h_{0}, q_0, x, \epsilon, \left\{ q_0 \right\}) \xrightarrow[d \to \infty]{} 1$:

And, when $\theta = \theta_m$, we are going to prove that $\nu(h_{\theta_m}, q_0, x, \epsilon, \left\{ q_0 \right\}) = 0$. 
Equivalently, we want to show that the set defined by:
\begin{equation}
\left\{ (x, \rho) \in \mathbb{R} \mathrel{\big|} x < \frac{\epsilon}{2} \mathrel{\big|} (x-\epsilon)^2 + \rho^2 \leq r^2 \mathrel{\big|} \rho > x \tan \theta_m \right\}
\end{equation}
is an empty set.
Let $(x,\rho)$ in this set. We have:
\begin{align}
  \rho &> x \tan( \arccos\left(\frac{\epsilon}{2r}\right) ) \\ 
   &= \frac{2rx}{\epsilon} \sqrt{1 - \frac{\epsilon^2}{4r^2}}
\end{align}
due to the equality: $\tan(\arccos(x)) = \frac{\sqrt{1-x^2}}{x}$.
Then, we have:
\begin{align}
  r^2 &\geq (x-\epsilon)^2 + \rho^2 \\
  &\geq x^2 - 2 x \epsilon + \epsilon^2 + \frac{4 r^2 x^2}{\epsilon^2} \left( 1 - \frac{\epsilon^2}{4r^2} \right) \\ 
  &= x^2 - 2 x \epsilon + \epsilon^2 + \frac{4 r^2 x^2}{\epsilon^2} - x^2 \\
  &= \frac{4 r^2}{\epsilon^2} x^2  - 2 x \epsilon + \epsilon^2
\end{align}
Hence we have:
\begin{equation}
\label{equation:contradiction}
  \frac{4 r^2}{\epsilon^2} x^2  - 2 x \epsilon + \epsilon^2 - r^2 \leq 0 \quad \text{and } \quad x \leq \frac{\epsilon}{2}
\end{equation}

But the minimum of the right hand side is: $\frac{\epsilon^3}{4r^2} \leq \frac{\epsilon}{2}$ because $r \leq \epsilon$.
Therefore, the r.h.s is increasing on the interval $[\frac{\epsilon}{2}, \infty]$ and is equal to 0 when $x = \frac{\epsilon}{2}$, which proves that no point verifies \Cref{equation:contradiction}. That allows us to conclude that: $\nu(h_{\theta_m}, q_0, x, \epsilon, \left\{ q_0 \right\}) = 0$.

\end{proof}

\subsection{Proof of \cref{proposition:findingUnderestimate}}
\findingUnderestimatePoints*

\begin{proof}

Recall the definition of the noised-based certificates (see~\Cref{definition:noised_based_certificate}).
When the infimum over $\mathcal{G}_{\mathcal{Q}}$ is attained by some $g$, we call $g$ a $\mathcal{Q}$-worst case classifier.
Now, let us denote $q_r$, the uniform distribution of radius $r>0$, let $r_1 > 0$ and let $D_{r_1} = \left\{z \in \mathcal{X} \mid g_{r_1}(z)=1\right\}$, a half-space. 
As we saw in the proof of \Cref{theorem:underestimate_cohen}, for any $\delta$ of norm $\epsilon$, the $\left\{q_{r_1}\right\}$-worst classifier $g_{r_1}$ has a decision region $D_{r_1}$ that is entirely contained in $B_2^d(x,r_1)$.
Hence for any $r > r_1$,
\begin{align}
    p(x,g_{r_1},q_r) &= \mathbb{P}\left[ U(x,r) \in D_{r_1} \right] \\ &= \frac{\Vol(B_2^d(0,r) \cap D_{r_1})}{\Vol(B_2^d(x,r))} \\
    &= \frac{\Vol(B_2^d(x,r_1) \cap D_{r_1})}{\Vol(B_2^d(x,r))}
\end{align}
because $B_2^d(x,r) \cap D_{r_1} \subset B_2^d(x,r_1)$. Since $r_1$ is constant, this is a decreasing function in $r$.

A similar proof works for 2-piecewise linear decision boundaries.
\end{proof}

\subsection{Proof of \Cref{proposition:successive_noises}}
\successivenoises*

\begin{figure}[th]
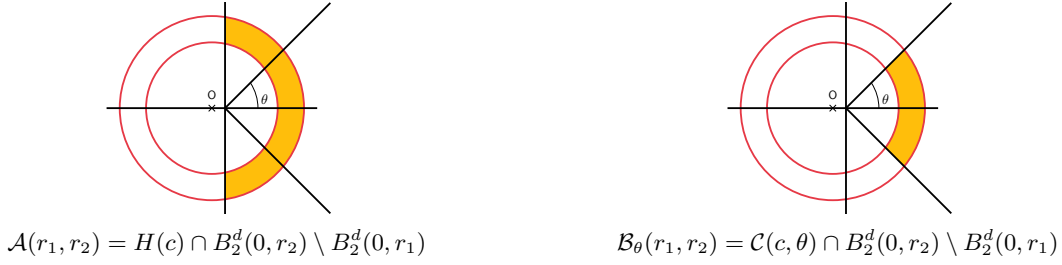

  \centering
  \hfill
  \begin{subfigure}[h]{0.49\textwidth}
    \centering
    \scalebox{0.35}{%
      \expandableinput{figures/successive_noises_proof_1}
    }%
    \caption*{$\mathcal{A}(r_1, r_2) = H(c) \cap B_2^d(0,r_2) \setminus B_2^d(0,r_1)$}
  \end{subfigure}
  \hfill
  \begin{subfigure}[h]{0.49\textwidth}
    \centering
    \scalebox{0.35}{%
      \expandableinput{figures/successive_noises_proof_2}
    }%
    \caption*{$\mathcal{B}_{\theta}(r_1,r_2) = \mathcal{C}(c, \theta) \cap B_2^d(0,r_2) \setminus B_2^d(0,r_1)$}
  \end{subfigure}
  \hfill
  \caption{Illustration of the proof of \Cref{lemma:successive_noises}.}
  \label{figure:successive_noises_proof}
\end{figure}

\begin{definition}[{\bf Volume growth for a decision boundary}]
Let $B_2^d$, the $\ell_2$-ball in dimension $d$, $\Acal$ a set and $r_1 > r_2 \geq 0$. We define the volume growth of $\Acal$ from $r_1$ to $r_2$ as:
\begin{equation*}
\Delta V (\Acal, r_1,r_2) = \Vol \left( \Acal \cap B_2^d(0,r_2) \right) - \Vol \left( \Acal \cap B_2^d(0,r_1) \right)
\end{equation*}
\end{definition}

\begin{lemma}[{\bf Growth function for concentric noises}]
\label{lemma:successive_noises}
Let $c \geq 0$, $r_2 > r_1 > c$. Then $\Delta V (C(c, \theta), r_1, r_2)$ is a continuous, increasing function of $\theta$, that is a bijection from  $\left[0, \pi \right]$ to $\left[ 0, \Delta V (H(c), r_1, r_2) \right]$.

The same result holds for 2-piecewise linear sets of parameter $\theta$.
\end{lemma}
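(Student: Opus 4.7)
The plan is to use hyper-cylindrical coordinates of axis $e_1$ (Definition~\ref{definition:hyper_cylindrical_coordinates}) to rewrite $\Delta V(\mathcal{C}(c,\theta), r_1, r_2)$ as
\[
f(\theta) \;=\; A\int_c^{r_2}\!\!\int_0^{z\tan\theta} \mathbb{1}\{r_1^2 < z^2+\rho^2 \leq r_2^2\}\,\rho^{d-2}\,d\rho\,dz,
\]
where $A>0$ collects the constant angular Jacobian, exploiting the rotational symmetry of both $\mathcal{C}(c,\theta)$ and the annulus about $e_1$. Monotonicity follows from $\mathcal{C}(c,\theta_1) \subset \mathcal{C}(c,\theta_2)$ whenever $\theta_1 \leq \theta_2$ in $[0,\pi/2]$ (since $\tan$ is increasing), and continuity from bounded dominated convergence: the indicator converges pointwise everywhere except on the measure-zero boundary surface $\rho=z\tan\theta$, and the integrand is dominated by $\rho^{d-2}\mathbb{1}\{z^2+\rho^2\leq r_2^2\}$, which is integrable. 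Boundary values are immediate: $f(0)=0$ because $\rho\in[0,0]$ has zero length, and $f(\pi/2)=\Delta V(H(c), r_1, r_2)$ because the $\rho$-constraint becomes vacuous and $\mathcal{C}(c,\pi/2)=H(c)$.

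The main work is establishing strict monotonicity. For $\theta_1<\theta_2$, the wedge $\mathcal{C}(c,\theta_2)\setminus \mathcal{C}(c,\theta_1) = \{z>c,\, z\tan\theta_1<\rho\leq z\tan\theta_2\}$ must intersect the annulus in a set of positive Lebesgue measure. The hypothesis $r_1>c$ is crucial: since the apex lies strictly inside the inner ball, for $R$ chosen slightly below $r_2$ but above $r_1$ and angular direction $\alpha\in(\theta_1,\theta_2)$ satisfying $R\cos\alpha>c$—achievable by picking $\alpha$ below $\arccos(c/R)$, a positive threshold—a full $d$-dimensional neighborhood of the corresponding point $(z,\rho,\phi_1,\dots,\phi_{d-2})=(R\cos\alpha,R\sin\alpha,\cdot)$ lies in the wedge intersected with the annulus. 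The Jacobian factor $\rho^{d-2}$ is strictly positive there, so $f(\theta_2)-f(\theta_1)>0$.

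Combining strict monotonicity, continuity, and the endpoint values, the intermediate value theorem yields the claimed bijection onto $[0,\Delta V(H(c), r_1, r_2)]$ (the natural domain of the parameterization is $[0,\pi/2]$, where the convex cone is well-defined). The 2-piecewise linear case is handled identically, working in $\Span(e_1,e_2)$ with the angular sector $\{|\arctan(x_2/x_1)|\leq\theta\}$ replacing the cone; the same monotonicity-by-inclusion, dominated-convergence continuity, and endpoint-evaluation trio applies. The step I expect to require the most care is locating the wedge intersection when $\theta_1$ is close to $\pi/2$ and $c>0$, since the constraint $z>c$ can clip the wedge before the angular opening does; the open-set argument must explicitly verify survival on the outer portion of the annulus where $R$ approaches $r_2$ and $\arccos(c/R)$ is maximized.
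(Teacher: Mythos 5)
Your proposal follows essentially the same route as the paper's proof: express the annular volume in hyper-cylindrical coordinates about the axis $e_1$, get monotonicity from the inclusion $\mathcal{C}(c,\theta_1)\subset\mathcal{C}(c,\theta_2)$, get continuity of the integral in $\theta$, and evaluate the endpoints $\theta=0$ and $\theta=\pi/2$. Your continuity argument via dominated convergence (the indicator changes only on the measure-zero surface $\rho=z\tan\theta$) is cleaner than the paper's, which instead writes the volume as an explicit three-piece iterated integral and reads continuity off that formula; both are valid. You are also more careful than the paper in two respects: you correctly restrict the natural domain to $[0,\pi/2]$ (the stated interval $[0,\pi]$ does not match the parameterization used anywhere in the paper), and you actually attempt to prove \emph{strict} monotonicity, which the paper never does --- its proof only derives non-strict monotonicity from set inclusion and then asserts the bijection.

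The caveat you flag at the end is not a defect of your argument but of the lemma itself when $c>0$. A point of $\mathcal{C}(c,\theta)$ at angular coordinate $\alpha$ from the axis and radius $R\leq r_2$ satisfies $\cos\alpha = z/\sqrt{z^2+\rho^2} > c/r_2$, so only directions $\alpha<\arccos(c/r_2)$ ever meet $B_2^d(0,r_2)$. Consequently $\Delta V(\mathcal{C}(c,\theta),r_1,r_2)$ is \emph{constant} (equal to $\Delta V(H(c),r_1,r_2)$) for all $\theta\geq\arccos(c/r_2)$, so the map is surjective onto $[0,\Delta V(H(c),r_1,r_2)]$ but not injective on $[0,\pi/2]$ unless $c=0$ (the case actually used in \Cref{proposition:successive_noises}, where $\arccos(0)=\pi/2$). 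Your open-set construction therefore succeeds exactly on $[0,\arccos(c/r_2))$ and provably cannot be completed beyond it; the correct statement is a bijection from $[0,\arccos(c/r_2)]$ onto the target interval, or from $[0,\pi/2]$ when $c=0$. Since the paper's own proof silently elides this (and its integral bounds $x=r_1\cos\theta$ implicitly take $c=0$), your proposal should be regarded as matching the intended argument, with the residual issue attributable to the statement rather than to your proof.
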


\begin{proof}[Proof of \Cref{lemma:successive_noises}]
Let $r_2 > r_1 > 0$, $c >0$.
Let $\mathcal{C}(c, \theta)$ be the cone of revolution of peak $c$ and angle $\theta$. Let $\mathcal{B}_{\theta}(r_1,r_2) = \mathcal{C}(c, \theta) \cap B_2^d(0,r_2) \setminus B_2^d(0,r_1)$ and $\mathcal{A}(r_1, r_2) = H(c) \cap B_2^d(0,r_2) \setminus B_2^d(0,r_1) $. 

First note that $\Vol (\mathcal{B}_{\theta}(r_1,r_2)) = \Vol(\mathcal{C}(c, \theta) \cap B_2^d(0,r_2)) - \Vol(\mathcal{C}(c, \theta) \cap B_2^d(0,r_1)) = \Delta V(\mathcal{C}(c,\theta),r_1,r_2)$, and similarly $\Vol (\mathcal{A}(r_1, r_2)) = \Delta V(H(c),r_1,r_2)$

To compute the volume of $\mathcal{B}_{\theta}(r_1,r_2)$, we must cut the integral into three zone: where the bound is the cone, and where it is the surface of either of the balls.
There exist a constant $K$ (dependent on the dimension, and containing the integration in all the variables $\phi_i$ in hyper-spherical coordinates) such as:
\begin{align}
  \Vol( \mathcal{B}_{\theta}(r_1,r_2) ) &=  \Vol(\mathcal{C}(c, \theta) \cap B_2^d(0,r_2) \setminus B_2^d(0,r_1)) \\
  &= \frac{1}{K} \left[ \int_{x = r_1 \cos \theta}^{r_1} \int_{\sqrt{r_1^2 - x^2}}^{x \tan \theta} \rho^{d-2} \diff \rho \diff x  + \right. \notag \\
  &\quad\quad\quad \quad \quad \left. \int_{r_1}^{r_2 \cos \theta} \int_{\rho = 0}^{x \tan \theta} \rho^{d-2} \diff \rho \diff x + \int_{r_2 \cos \theta}^{r_2} \int_{\rho = 0}^{\sqrt{r_2^2 - x^2}} \rho^{d-2} \diff \rho \diff x \right]
\end{align}

It follows that $\Vol (\mathcal{B}_\theta(r_1, r_2))$ is a continuous function of $\theta$.

Furthermore, if $\theta_2 > \theta_1$, then $\mathcal{C}(c,\theta_1) \subset \mathcal{C}(c,\theta_2)$, so $\Vol (\mathcal{B}_{\theta_1}(r_1, r_2)) \leq \Vol (\mathcal{B}_{\theta_2}(r_1, r_2))$, and the function is increasing.

For $\theta=0$, $\Vol (\mathcal{B}_{\theta}(r_1, r_2)) = 0$, and for $\theta=\frac{\pi}{2}$, $\mathcal{B}_{\theta_1}(r_1, r_2)) = \mathcal{A}(r_1,r_2)$. Hence the result.

\end{proof}

\begin{proof}[Proof of \Cref{proposition:successive_noises}]
This is an immediate consequence of \cref{lemma:successive_noises} : From the information of two noises, we can perfectly identify the parameter $\theta$, and thus compute the perfect certificate as $\mathbb{P}_{X \sim q_0(x)}\left[X + \epsilon e_1 \in \mathcal{C}(0,\theta) \right]$.
\end{proof}

\section{Proofs of \Cref{section:neyman_pearson}}

\subsection{Proof of \Cref{lemma:generalized_neyman_pearson}} 
\generalizedneymanpearson*

\begin{proof}[Proof of \Cref{lemma:generalized_neyman_pearson}]
By definition of $\Phi_\mathcal{K}$, we have:
\begin{equation}
   \int (\Phi - \Phi_\mathcal{K}) (q_0-\sum\limits_{k=1}^n k_i q_i ) \diff \mu \geq 0
\end{equation}
since the integrand is always positive. Hence: 
\begin{equation}
  \int (\Phi - \Phi_\mathcal{K}) q_0 \diff \mu \geq \sum\limits_{k=1}^n k_i \int (\Phi - \Phi_\mathcal{K}) q_i d\mu
\end{equation}

Since $\int (\Phi - \Phi_\mathcal{K}) q_i \diff \mu \geq 0$, we have:
\begin{equation}
  \int (\Phi - \Phi_\mathcal{K}) q_0 \diff \mu \geq 0
\end{equation}
which is the desired result.
\end{proof}

\subsection{Proof of \Cref{theorem:approx_theorem}}
\approximationtheorem*

\begin{proof}[Proof of \Cref{theorem:approx_theorem}]

Let $n>0$, and some $x \in \mathcal{X}$.
We can construct a grid of $(n(r+ {2} \epsilon))^d$ disjoint squares of side size $\frac{1}{n}$, of the form $\left[ \frac{a_1}{n}, \frac{a_1 + 1}{n} \right[ \times \dots \times \left[ \frac{a_d}{n}, \frac{a_d + 1}{n} \right[$ (except the ones on the border of the ball that are closed) that will cover the ball $B_\infty^d(x,r)$, as well as its translation by $\epsilon$ in any direction.

Let us call the squares in this grid $A_j$ for $j=1 \dots m$ and $m=(\frac{d+{2}\epsilon}{n})^d$.
They all have the same volume $V_n = (\frac{1}{n})^d$.

The idea of this proof is to construct a noise-based classifier that returns 1 only on the squares that are entirely contained in the decision region, \ie. a strict underestimate of the true classifier.
As the grid gets thinner, the approximation will then converge to the true classifier as a Riemann sum.

Let $q_j$ denote the probability density function of the uniform noise over $A_j$:
\begin{equation}
  \forall j \in \left\{ 1,\dots,m\right\}, z \in \mathbb{R}^d, q_j(z) = \frac{1}{V_n}\mathbb{1}_{z \in A_j}
\end{equation}

Let $V = \Vol( B^d_{\infty}(0,r)) $. For $j \in \left\{ 1, \dots, m\right\}$, let $p_{j} = \int h(z)q_{j}(z)dz$ be the expected response of the true classifier $h$ to noise $q_j$ (i.e. what is observed), and the coefficients $k_j$ such that:
\begin{equation}
  k_j = \begin{cases}
  \frac{V_n}{V} & \text{if $p_j=1$, \ie, $h=1$ almost surely on $A_j$}\\
  0 & \text{otherwise.}
  \end{cases}
\end{equation}

We choose these specific coefficients to only "activate" the squares where $h=1$ almost surely, i.e. that are entirely inside of the decision region.

Let $\delta$ be any attack vector of norm $\epsilon$, and $\tilde{q_0} = q_0(\ \cdot\ - \delta)$ be the distribution after attack by $\delta$. The support of $\tilde{q_0}$ is $B_\infty^d(- \delta, r)$ which is fully contained in $\bigcup\limits_{i=1}^{m}A_i$.

Let $\Phi_\mathcal{K}$ be the Neyman-Pearson function defined by the $\mathcal{K}:=\{ k_1, \dots, k_n\}$:
\begin{equation}
  \Phi_\mathcal{K}(x') = 
  \begin{cases}
  1 &\text{if $\tilde{q_0}(x') \le \sum_{i=1}^{n} k_i q_i(x')$}\\
  0 &\text{otherwise}\\
  \end{cases} 
\end{equation}

We know that $\Phi_\mathcal{K} = 1$ outside of $B_\infty^d(x + \delta, r)$ since $\tilde{q_0}=0$ there.
Let $x' \in B_\infty^d(x+\delta, r)$.
The $A_i$ are disjoint and cover the ball, so there is exactly one $j$ such that $x' \in A_j$.
We then have for any $x'\in A_j$:
\begin{equation}
  \Phi_\mathcal{K}(x') = 
  \begin{cases}
  1 &\text{if $h=1$ almost surely on $A_j$}\\
  0 &\text{otherwise}
  \end{cases} 
\end{equation}

Hence $\Phi_\mathcal{K}|_{A_j} = \essinf\limits_{A_j}(h)$, since $h$ has values in $\{0,1\}$. It follows that:
\begin{equation}
\int \Phi_\mathcal{K}(z) \tilde{q_0}(z)dz = \sum\limits_{i=1}^{m} (\essinf\limits_{A_j}(h))\Vol(A_i \cap B_\infty^d(x-\delta,r))
\end{equation}

That is a lower Riemann sum for the integral $\int\limits_{B_\infty^d(x-\delta,r)} h$, and so converges to it when $m\to\infty$ as $h$ is Riemann integrable.
Hence we can choose n such that, for any $\delta$,
\begin{equation}
  \int \Phi_\mathcal{K}(z) \tilde{q_0}(z)dz \leq \int h(z) \tilde{q_0}(z)dz + \xi
\end{equation}
which gives us the desired result, since this is true for any $\delta$.
\end{proof}

\section{Proofs of \Cref{section:experiments}}

\subsection{Proof of \Cref{theorem:different_sigma}}
\differensigma*

\begin{proof}[Proof of \Cref{theorem:different_sigma}]

Let $x \in \mathbb{R}^d$. $q_0$ is an isotropic probability density function, which means that there exists a function $p_0$ such that $\forall x \in \mathbb{R}^d, q_0(x) = p_0(\| x \|^2)$. For $i=1\dots n$, we have:

\begin{equation}
  \label{equation:proof_diff_sigma_certificate}
  \Pbb\left[ \Ncal(x,\sigma_i^2) \in \Scal_{k_1,\dots,k_n}\right] = \frac{1}{(2\pi)^{\frac{d}{2}} \sigma_0^d} \int\limits_{\Rbb^d} \exp\left( -\frac{\norm{u - x- \delta}^2}{2\sigma_0^2} \right) \mathbb{1}_{u \in \Scal_{k_1,\dots,k_n}} \diff u
\end{equation}
where $\Scal$ is the Neyman-Person set defined as:
\begin{equation}
  \Scal_{k_1,\dots,k_n} = \left\{ u \in \Rbb^d \mathrel{\bigg|} p_0(\norm{ u - x - \delta}^2) \leq \sum_{i = 0}^n k_i \exp\left(-\frac{\norm{u-x}^2}{2\sigma_i^2}\right) \right\}
\end{equation}

Where the $k_i$ are defined as in~\Cref{corollary:NPcertificate}.

By expressing~\Cref{equation:proof_diff_sigma_certificate} with hypercylindrical coordinates of center x and axis $\delta$, we have:
\begin{align}
  \Pbb\left[ \Ncal(x,\sigma_i^2) \in \Scal_{k_1,\dots,k_n}\right] &= \frac{1}{(2\pi)^{\frac{d}{2}} \sigma_i^d} \int\limits_{\Rbb^d} \exp\left( -\frac{\norm{u-x}^2}{2\sigma_i^2} \right) \mathbb{1}_{x \in \Scal} \diff x \\
  &= \frac{1}{(2\pi)^{\frac{d}{2}} \sigma_0^d} \int\limits_{\substack{\mu \in \Rbb \\ r \in \Rbb^+ \\ \phi_1, \dots, \phi_{d-3} \in [-\pi, \pi] \\ \phi_{d-2} \in [0, 2\pi]}}  \exp\left( - \frac{r^2 + \mu^2}{2\sigma^2} \right) \mathbb{1}_{(r, \mu) \in \tilde{\Scal}} J \diff r \diff \mu \diff \phi_{1} \dots \diff \phi_{d-2}
\end{align}
where from~\citet{blumenson1960derivation}, the Jacobian $J$ of the change of variables is:
\begin{equation}
    J = r^{d-2} \prod_{k = 1}^{d-2} \sin^k \phi_{d-1-k}
\end{equation}
and where $\tilde{\Scal}$ is the updated Neyman-Person set:
\begin{equation}
  \tilde{\Scal} = \left\{ r, \mu \in \Rbb \mathrel{\bigg|} p_0(r^2 + (\mu - \epsilon)^2) \leq \sum_{i = 0}^n k_i \exp\left(-\frac{r^2+\mu^2}{2\sigma_i^2}\right) \right\}
\end{equation}

Given that the indicator function is independent of the $\phi_1, \dots, \phi_{d-2}$, we can rearrange the above equation as follows:
\begin{align}
  \Pbb\left[ \Ncal(x,\sigma_i^2) \in \Scal_{k_1,\dots,k_n}\right] &= \frac{1}{(2\pi)^{\frac{d}{2}} \sigma_i^d} \left(\ \ \int\limits_{\substack{\mu \in \Rbb \\ r \in \Rbb^+}}  \exp\left( - \frac{r^2 + \mu^2}{2\sigma^2} \right) r^{d-2} \mathbb{1}_{(r, \mu) \in \tilde{\Scal}}  \diff r \diff \mu \right) \notag \\
  &\quad\quad\quad\quad \left( \int\limits_{\substack{\phi_1, \dots, \phi_{d-3} \in [-\pi, \pi] \\ \phi_{d-2} \in [0, 2\pi]}} \prod_{k = 1}^{d-2} \sin^k \phi_{d-1-k} \diff \phi_{1} \dots \diff \phi_{d-2} \right)
\end{align}
By setting $A$ as:
\begin{equation}
    A = \int\limits_{\substack{\phi_1, \dots, \phi_{d-3} \in [-\pi, \pi] \\ \phi_{d-2} \in [0, 2\pi]}} \prod_{k = 1}^{d-2} \sin^k \phi_{d-1-k} \diff \phi_{1} \dots \diff \phi_{d-2}
\end{equation}
we have:
\begin{equation}
  \Pbb\left[ \Ncal(x,\sigma_i^2) \in \Scal_{k_1,\dots,k_n}\right] = \frac{A}{(2\pi)^{\frac{d}{2}} \sigma_0^{d}} \left(\ \  \int\limits_{\mu \in \Rbb} \int\limits_{r \in \Rbb^+} \exp\left(-\frac{r^2}{2\sigma_0^2} \right) \exp\left(-\frac{\mu^2}{2\sigma^2} \right) r^{d-2} \mathbb{1}_{(r, \mu) \in \tilde{\Scal}} \diff r \diff \mu \ \ \right)
\end{equation}
Finally, we can express this probability with an expected value over a Gaussian and Chi distribution:
\begin{equation}
    \Pbb\left[ \Ncal(x,\sigma_i^2) \in \Scal_{k_1,\dots,k_n}\right] = \Ebb_{\substack{\mu \sim \Ncal(0, \sigma_i^2) \\ r \sim \chi(d-1, 0, \sigma_i^2)}} \left[ \mathbb{1}_{(r, \mu) \in \tilde{\Scal}} \right]  
\end{equation}
which concludes the proof.
\end{proof}

\end{document}